\def\argmin{\mathop{\rm arg \, min}}
\providecommand{\tr}{\textbf{tr}}
\providecommand{\one}{\textnormal{\textbf{1}}}
\providecommand{\zero}{\textnormal{\textbf{0}}}
\def\H{{\mathcal H}}
\def\R{{\mathbb R}}
\def\MSE{{\mathrm{MSE}}}
\begin{document}

\title{Multi-Task Averaging}

\author{\name Sergey Feldman \email sergeyf@u.washington.edu \\
 \addr Dept. of Electrical Engineering\\
       University of Washington\\
       Seattle, WA 98195, USA\\
       \name Maya R. Gupta \email mayagupta@google.com\\
       \addr Google Research\\
       \addr Google\\
       1225 Charleston Rd \\
       Mountain View, CA 94301, USA\\
       \name Bela A. Frigyik \email frigyik@gmail.com\\
       \addr Institute of Mathematics and Informatics\\
       University of P\'{e}cs \\
H-7624 P\'{e}cs, Ifj\'{u}s\'{a}g St. 6, Hungary\\
       }

\editor{}

\maketitle

\begin{abstract}%
We present a multi-task learning approach
to jointly estimate the means of multiple independent data sets.
The proposed multi-task averaging (MTA) algorithm results in
a convex combination of the single-task maximum
likelihood estimates. We derive the optimal minimum risk estimator and the minimax estimator,
and show that these estimators can be efficiently estimated.
Simulations and real data experiments
demonstrate that MTA estimators often outperform both single-task and James-Stein estimators.
\end{abstract}

\begin{keywords}
multi-task learning, James-Stein estimators, Stein phenomenon, kernel density estimation
\end{keywords}

\section{Introduction}
\label{sec:intro}
The motivating hypothesis behind multi-task learning (MTL) algorithms
is that leveraging data from related tasks can yield superior
performance over learning from each task independently. Early evidence for this hypothesis is Stein's work on the estimation of the means of $T$ distributions (tasks) \citep{Stein:56}. Stein showed that it is better (in a summed squared error sense) to estimate each of the means of $T$ Gaussian random variables using data sampled from all of them, even if the random variables are independent and have different means.  That is,  it is beneficial to consider samples from seemingly \emph{unrelated} distributions in the estimation of the $t$th mean.  This surprising result is often referred to as \emph{Stein's paradox} \citep{EfronMorris:77}.

Mean estimation is perhaps the most common of all estimation tasks, and often multiple means need to be estimated.  In this work we consider a multi-task regularization approach to the problem of estimating multiple means, which we call \emph{multi-task averaging} (MTA).  We show that MTA has provably nice theoretical properties, is effective in practice, and is computationally efficient.

We define the MTA objective in Section \ref{sec:mta}, and review related work in Section \ref{sec:prior_work}.  We present some key properties of MTA in Section \ref{sec:theory}; in particular, we derive the optimal amount of regularization to be used, and show that this optimal amount can be effectively estimated.  Simulations in Section \ref{sec:simulations} verify the advantage of MTA over standard sample means and James-Stein estimation  if the true means are close compared to the variance.  Two applications, (i) estimating expected sales and (ii) estimating final class grades, show that MTA can reduce real errors by over 30\%, as reported in Sections \ref{sec:product_sales} and \ref{sec:grades}.  MTA can be used anywhere multiple averages are needed; we demonstrate this by applying it fruitfully to the averaging in kernel density estimation in Section \ref{sec:KDE}.

\section{Multi-Task Averaging}
\label{sec:mta}

\begin{table}
\caption{Key Notation}
\label{tab:notation}
\begin{center}
\begin{tabular}{l|l}
\hline
&\\
$T$ & number of tasks\\
$N_t$ & number of samples for $t$th task \\
$Y_{ti} \in \mathbb{R}$ & $i$th random sample from $t$th task\\
$\bar{Y}_t \in \mathbb{R}$ & sample average for $t$th task: $\frac{1}{N_t} \sum_i Y_{ti}$\\
$\bar{Y} \in \mathbb{R}^{T}$ & vector with $t$th component $\bar{Y}_t$\\
$Y^*_t \in \mathbb{R}$ & MTA estimate of $t$th  mean\\
$Y^* \in \mathbb{R}^{T}$ & vector with $t$th component $Y^*_t$\\
$\sigma_t^2$ & variance of the $t$th mean\\
$\Sigma$ & diagonal covariance matrix of $\bar{Y}$ with $\Sigma_{tt} = \frac{\sigma_t^2}{N_t}$\\
$A \in \mathbb{R}^{T \times T}$ & pairwise task similarity matrix\\
$L = D - A$ & graph Laplacian of $A$, with diagonal $D$ s.t. $D_{tt} = \sum_{r=1}^T A_{tr}$\\
$W$ & MTA solution matrix, $W = (I + \frac{\gamma}{T}\Sigma L)^{-1}$
\end{tabular}
\end{center}
\end{table}

Consider the problem of estimating the means of $T$ random variables that have finite mean and variance, which is a $T$-task problem from a multi-task learning perspective.  Let  $\{Y_{ti} \}_{i=1}^{N_t}$ be $N_t$ independent and identically distributed (iid) random samples for task $t = 1, \ldots, T$.  Other key notation is in Table \ref{tab:notation}. Assume that the $T \times T$ matrix $A$ describes the relatedness or similarity of any pair of the $T$ tasks, with $A_{tt} = 0$ for all $t$ without loss of generality (because the diagonal self-similarity terms are canceled in the objective below).

The proposed MTA objective is
\begin{align}\label{eqn:mta}
\{Y^*_t\}_{t=1}^T = \argmin_{\{\hat{Y}_t\}_{t=1}^T} ~\frac{1}{T}\sum_{t=1}^T  \sum_{i=1}^{N_t} \frac{(Y_{ti}-\hat{Y}_t)^2}{\sigma_t^2}
+ \frac{\gamma}{T^2} \sum_{r=1}^T\sum_{s=1}^T A_{rs} (\hat{Y}_r-\hat{Y}_s)^2.
\end{align}
The first term of (\ref{eqn:mta}) minimizes the multi-task empirical loss, and the second term jointly regularizes the estimates (i.e. ties them together).  The regularization parameter $\gamma$ balances the empirical risk and the multi-task regularizer. Note that if $\gamma = 0$, the MTA objective decomposes into $T$ separate minimization problems, producing the sample averages $\bar{Y}_t$.

The normalization of each error term in (\ref{eqn:mta}) by its task-specific variance $\sigma_t^2$ (which may be estimated) scales the $T$ empirical loss terms relative to the variance of their distribution; this ensures that high-variance tasks do not disproportionately dominate the loss term.

A more general formulation of MTA is
\begin{align*}
\{Y^*_t\}_{t=1}^T = \argmin_{\{\hat{Y}_t\}_{t=1}^T} ~\frac{1}{T}\sum_{t=1}^T \sum_{i=1}^{N_t} L(Y_{ti}, \hat{Y}_t)
+ \gamma J\left(\{\hat{Y}_t\}_{t=1}^T \right),
\end{align*}
where $L$ is some loss function and $J$ is a regularization function. If $L$ is chosen to be any Bregman loss, then setting $\gamma = 0$ will produce the $T$ sample averages \citep{Banerjee05}. For the analysis and experiments in this paper, we restrict our focus to the tractable squared-error formulation given in (\ref{eqn:mta}). The MTA objective and many of the results in this paper generalize trivially to samples that are vectors rather than scalars, but for notational simplicity we restrict our focus to scalar samples $Y_{ti} \in \mathbb{R}$.

The task similarity matrix $A$ can be specified as side information (e.g. from a domain expert), but often this side information is not available, or it may not be clear how to convert semantic notions of task similarity into an appropriate choice for the task-similarity values in $A$.  In Section \ref{sec:theory}, we derive two optimal choices of $A$ for the $T=2$ case: the $A$ that minimizes expected squared error, and a minimax $A$. We use the $T=2$ analysis to propose  practical estimators of $A$ for any number of tasks.

\section{Related Work}\label{sec:prior_work}
In this section, we review related and background material: James-Stein estimation, multi-task learning, manifold regularization, and the graph Laplacian.

\subsection{James-Stein Estimation}
A closely related body of work to MTA is Stein estimation, an empirical Bayes strategy for estimating multiple means simultaneously \citep{JamesStein:65,Bock72,EfronMorris:77,Casella:85}.  \citet{JamesStein:65} showed that the maximum likelihood estimate of  $\mu_t$  can be dominated by a shrinkage estimate given Gaussian assumptions. Specifically, given a single sample drawn from $T$ normal distributions $Y_t \sim \mathcal{N}(\mu_t,\sigma^2)$ for $t = 1,\ldots,T$,
Stein showed that the maximum likelihood estimator $\bar{Y}_t = Y_t$ is inadmissible, and is dominated by the James-Stein estimator:
\begin{align}\label{eqn:js}
\hat{Y}_t^{JS} = \left(1-\frac{(T-2)\sigma^2}{\bar{Y}^T\bar{Y}}\right)\bar{Y}_t,
\end{align}
where $\bar{Y}$ is a vector with $t$th entry $\bar{Y}_t$.  The above estimator dominates $\bar{Y}_t$ when $T>2$.  For $T = 2$, (\ref{eqn:js}) reverts to the maximum likelihood estimator, which turns out to be admissible \citep{Stein:56}.  James and Stein \citep{JamesStein:65, Casella:85} showed that if $\sigma^2$ is unknown it can be replaced by a standard unbiased estimate $\hat{\sigma}^2$.

Note that in (\ref{eqn:js}) the James-Stein estimator \emph{shrinks} the maximum likelihood estimates towards zero (the terms ``regularization'' and ``shrinkage'' are often used interchangeably).  The form of (\ref{eqn:js}) and its shrinkage towards zero points to the implicit assumption that the $\mu_t$ are drawn from a standard normal distribution centered at 0.  More generally, the means are assumed to be drawn as $\mu_t \sim \mathcal{N}(\xi,1)$.  The James-Stein estimator then becomes
\begin{align}\label{eqn:jsNotAdmiss}
\hat{Y}_t^{JS} = \xi + \left(1-\frac{(T-3)\sigma^2}{(Y-\xi)^T(Y-\xi)}\right)(\bar{Y}_t - \xi),
\end{align}
where $\xi$ can be estimated (as we do in this work) as the average of means $\xi = \bar{\bar{Y}} = \frac{1}{T}\sum_{r=1}^T \bar{Y}_r$, and this additional estimation decreases the degrees of freedom by one\footnote{For more details as to why $T-2$ in (\ref{eqn:js}) becomes $T-3$ in (\ref{eqn:jsNotAdmiss}) see Example 7.7 on page 278 of \citet{LehmannCasella}.}.

There have been a number of extensions to the original James-Stein estimator.   Throughout this work, we compare to the well-regarded positive-part James-Stein estimator for multiple data points per task and independent unequal variances \citep{Bock72,LehmannCasella}.  In particular, let $Y_{ti} \sim \mathcal{N}(\mu_t,\sigma_t^2)$ for $t = 1,\ldots,T$ and $i = 1,\ldots,N_t$, let $\Sigma$ be the covariance matrix of $\bar{Y}$, the vector of task sample means, and let $\lambda_{\text{max}}(\Sigma)$ be the largest eigenvalue of $\Sigma$.  The James-Stein estimator given in (\ref{eqn:jsNotAdmiss}) is itself is not admissible, and is dominated by the positive part James-Stein estimator \citep{LehmannCasella}, which is further theoretically improved by Bock's James-Stein estimator \citep{Bock72}:
\begin{equation} \label{eqn:jsBock}
\hat{Y}_t^{JS} = \xi + \left(1-\frac{\frac{\tr(\Sigma)}{\lambda_{\text{max}}(\Sigma)} - 3}{(\bar{Y}- \xi )^T\Sigma^{-1}(\bar{Y}- \xi )}\right)_+ (\bar{Y}_t - \xi),
\end{equation}
where  $(x)_+ = \max(0, x)$. The term $\frac{\tr(\Sigma)}{\lambda_{\text{max}}}$ is called the \emph{effective dimension} of the estimator. In simulations where we set the true covariance matrix to be $\Sigma$ and then estimated the effective dimension by estimating the maximum eigenvalue and trace of the sample covariance matrix, we found that replacing the effective dimension with the actual dimension $T$ (when $\Sigma$ is diagonal) resulted in a significant performance boost for Bock's James-Stein estimator.  For the case of a diagonal $\Sigma$, there are $T$ separate distributions, thus the effective dimension is exactly $T$. In other preliminary experiments with real data, we also found that using $T$ rather than the effective dimension performed better due to the high variance of the estimated maximum eigenvalue in the denominator of the effective dimension.  Consequently, in the experiments in this paper, when we compare to James-Stein estimation, we compare to (\ref{eqn:jsBock}) using $T$ for the effective dimension.


\subsection{Multi-Task Learning for Mean Estimation}
MTA is an approach to the problem of estimating $T$ means. We are not aware of other work in the multi-task literature that addresses this problem explicitly; most MTL methods are designed for regression, classification, or feature selection, e.g. \citet{Micchelli:04,Bonilla:08,Argyriou08}.  Estimating $T$ means can be considered a special case of multi-task regression\footnote{With a feature space of zero dimensions, only the constant offset term is learned.}, where one fits a constant function to each task.  And, similarly to MTA, one of the main approaches to multi-task regression in literature is tying tasks together with an explicit multi-task parameter regularizer.

\citet{Abernethy09}, for instance, propose to minimize the empirical loss with the following added regularizer,
$$ ||\beta||_*,$$
where the $t$th column of the matrix $\beta$ is the vector of parameters for the $t$th task and $||\cdot||_*$ is the trace norm.  For mean estimation, the matrix $\beta$ has only one row, and its trace norm has little meaning.

\citet{Argyriou08} propose an alternating approach with a different regularizer,
$$\tr(\beta^TD^{-1}\beta),$$
where $D$ is a learned, shared \emph{feature} covariance matrix.  Again, with no features, $D$ is just a constant.  The regularizers in the work of \citet{Jacob08} and \citet{ZhangUAI10} are similarly inappropriate when in the context of mean estimation.

The most closely related work is that of \citet{SheldonNIPS08} and \citet{KatoNIPS07}, where the regularizer or constraint, respectively, is
$$\sum_{r=1}^T\sum_{s=1}^T A_{rs}\|\beta_r-\beta_s\|_2^2,$$
which is the MTA regularizer when performing mean estimation.

\subsection{Multi-Task Learning and the Similarity Between Tasks}
A key issue for MTA and many other multi-task learning methods is how to estimate some notion of similarity (or task relatedness) between tasks and/or samples if it is not provided. A common approach is to estimate the similarity matrix jointly with the task parameters \citep{Argyriou07,Xue07,Bonilla:08,Jacob08,ZhangUAI10}. For example, \citet{ZhangUAI10} assume that there exists a covariance matrix for the task relatedness, and proposed a convex optimization approach to estimate the task covariance matrix and the task parameters in a joint, alternating way.  Applying such joint and alternating approaches to the MTA objective given in (\ref{eqn:mta}) leads to a degenerate solution with zero similarity.  However, the simplicity of MTA enables us to specify the optimal task similarity matrix for $T=2$ (see Sec. \ref{sec:theory}), which we use to obtain a number of closed-form estimators for the general $T > 1$ case.

\subsection{Manifold Regularization}
MTA is similar in form to \emph{manifold regularization} \citep{Belkin:06}.
For example, Belkin et al.'s Laplacian-regularized least squares objective for semi-supervised regression solves
\begin{eqnarray*}
\argmin_{f \in \H} & \sum_{i=1}^{N}(y_i - f(x_i))^2 +
 \lambda ||f||_{\H}^2 + \gamma \sum_{i,j=1}^{N+M} A_{ij}(f(x_i)-f(x_j))^2,
\end{eqnarray*}
where $f$ is the regression function to be estimated, $\H$ is a reproducing kernel Hilbert space (RKHS),
$N$ is the number of labeled training samples, $M$ is the number
of unlabeled training samples, $A_{ij}$ is the similarity
(or weight in an adjacency graph) between feature samples $x_i$ and $x_j$,
and $||f||_{\H}$ is the norm of the function $f$ in the RKHS.
In MTA, as opposed to manifold regularization, we are estimating a different function (that is, the constant function that is the mean) for each of the $T$ tasks, rather than a single global function. One can interpret MTA as regularizing the individual task estimates over the task-similarity manifold, which is defined for the $T$ tasks by the $T \times T$ matrix $A$.

\subsection{Background on the Graph Laplacian Matrix}\label{sec:laplacian}
It will be helpful for later sections to review the graph Laplacian matrix. For graph $G$ with $T$ nodes, let $A \in \R^{T\times T}$ be a matrix where component $A_{rs} \geq 0$ is the weight of the edge between node $r$ and node $s$, for all $r,s$.  The \emph{graph Laplacian matrix} is defined as $L = L(A) = D - A$, with diagonal matrix $D$ such that $D_{tt} = \sum_{s} A_{ts}$.

The graph Laplacian  matrix is analogous to the Laplacian operator $\Delta g(x) = \tr(H(g(x))) = \frac{\partial^2g(x)}{\partial x_1^2} + \frac{\partial^2g(x)}{\partial x_2^2} + \ldots + \frac{\partial^2g(x)}{\partial x_M^2}$, which quantifies how locally smooth a twice-differentiable function $g(x)$ is.  Similarly, the graph Laplacian matrix $L$  can be thought of as being a measure of the smoothness of a function defined on a graph \citep{Chung94}.  Given a function $f$ defined over the $T$ nodes of graph $G$, where $f_i \in \R$ is the function value at node $i$, the total \emph{energy} of a graph is (for symmetric $A$)
$$\mathcal{E}(f) = \frac{1}{2}\sum_{i=1}^T\sum_{j=1}^T A_{ij}(f_i - f_j)^2 = f^TL(A)f,$$
which is small when $f$ is smooth over the graph \citep{Zhu05}.  If $A$ is asymmetric then the energy can be written as
$$\mathcal{E}(f) = \frac{1}{2}\sum_{i=1}^T\sum_{j=1}^T A_{ij}(f_i - f_j)^2 = f^TL((A+A^T)/2)f.$$
Note that the above formulation of the energy in terms of the graph Laplacian holds for the scalar case.  More generally, when each $f_i \in \R^d$ is a vector, one can alternatively write the energy in terms of the distance matrix:
$$\mathcal{E}(f) = \frac{1}{2}\tr(\Delta^T A),$$
where $\Delta_{ij} = (f_i-f_j)^T(f_i-f_j)$

As discussed above, the graph Laplacian can be thought of as an operator on a function, but it is useful in and of itself (i.e. without a function).  The eigenvalues of the graph Laplacian are all real and non-negative, and there is a wealth of literature showing how the eigenvalues reveal the structure of the underlying graph \citep{Chung94}; the eigenvalues of $L$ are particularly useful for spectral clustering \citep{Luxburg07}.  The graph Laplacian is a common tool in semi-supervised learning literature \citep{Zhu06}, and the Laplacian of a random walk probability matrix $P$ (i.e. all the entries are non-negative and the rows sum to 1) is also of interest. For example, Saerens et al. \citep{Saerens04} showed that the pseudo-inverse of the Laplacian of a probability transition matrix is used to compute the square root of the average commute time (the average time taken by a random walker on graph $G$ to reach node $j$ for the first time when starting at node $i$, and coming back to node $i$). 


\section{MTA Theory}
\label{sec:theory}
We derive a closed-form solution for $A$ and various properties. Proofs and derivations are in the appendix.

\subsection{Closed-form MTA Solution}\label{sec:solution}
For symmetric $A$ with non-negative components\footnote{Using an asymmetric $A$  with MTA is equivalent to using the symmetric matrix $(A^T + A)/2$.}, the MTA objective given in (\ref{eqn:mta}) is continuous, differentiable, and convex; and (\ref{eqn:mta}) has closed-form solution:
\begin{align}\label{eqn:mta_sln}
Y^* &= \left(I + \frac{\gamma}{T} \Sigma L\right)^{-1}\bar{Y}
\end{align}
where $\bar{Y}$ is the vector of sample averages with $t$th entry $\bar{Y}_t = \frac{1}{N_t}\sum_{i=1}^{N_t} Y_{ti}$, $L$ is the graph Laplacian of $A$, and $\Sigma$ is the diagonal covariance matrix of  the sample mean vector $\bar{Y}$ such that $\Sigma_{tt} = \frac{\sigma_t^2}{N_t}$. The inverse $\left(I + \frac{\gamma}{T} \Sigma L\right)^{-1}$ always exists:

\begin{lemma}Assume that $0 \leq A_{rs} < \infty $ for all $r,s$, $\gamma \geq 0$, and $0 < \frac{\sigma_t^2}{N_t} < \infty$ for all $t$.  The MTA solution matrix $W = \left(I + \frac{\gamma}{T} \Sigma L\right)^{-1}$ exists.
\end{lemma}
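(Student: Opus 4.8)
The plan is to show that the matrix $M = I + \frac{\gamma}{T}\Sigma L$ is invertible by proving it is nonsingular, which is equivalent to showing that $-\frac{T}{\gamma}$ (when $\gamma > 0$) is not an eigenvalue of $\Sigma L$, or more directly that $Mx = 0$ forces $x = 0$. The main subtlety is that neither $\Sigma L$ nor $M$ is symmetric in general, so I cannot simply invoke positive-definiteness of a symmetric matrix. The key idea is to exploit a \emph{similarity transformation} that symmetrizes the product. Since $\Sigma$ is diagonal with strictly positive entries $\Sigma_{tt} = \sigma_t^2/N_t > 0$, the matrix $\Sigma^{1/2}$ exists and is invertible, and $\Sigma L$ is similar to the symmetric matrix $\Sigma^{1/2} L \Sigma^{1/2}$.

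First I would handle the trivial case $\gamma = 0$, where $M = I$ is obviously invertible. For $\gamma > 0$, I would argue that the eigenvalues of $\Sigma L$ coincide with those of $\Sigma^{1/2} L \Sigma^{1/2}$ (by the similarity $\Sigma L = \Sigma^{1/2}(\Sigma^{1/2} L \Sigma^{1/2})\Sigma^{-1/2}$). The next step is the heart of the argument: the graph Laplacian $L$ of a symmetric matrix $A$ with nonnegative entries is positive semidefinite, since for any vector $x$ we have $x^T L x = \frac{1}{2}\sum_{r,s} A_{rs}(x_r - x_s)^2 \geq 0$, which is exactly the energy identity recalled in Section~\ref{sec:laplacian}. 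Congruence by the invertible matrix $\Sigma^{1/2}$ preserves positive semidefiniteness, so $\Sigma^{1/2} L \Sigma^{1/2}$ is symmetric positive semidefinite and therefore has only real, nonnegative eigenvalues. Consequently every eigenvalue $\lambda$ of $\Sigma L$ is real with $\lambda \geq 0$.

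It then follows that every eigenvalue of $M = I + \frac{\gamma}{T}\Sigma L$ has the form $1 + \frac{\gamma}{T}\lambda$ with $\lambda \geq 0$ and $\gamma/T > 0$, hence is at least $1 > 0$. Since no eigenvalue of $M$ is zero, $\det(M) \neq 0$ and the inverse $W = M^{-1}$ exists, which is the claim. The finiteness assumptions ($A_{rs} < \infty$ and $\sigma_t^2/N_t < \infty$) guarantee that all entries of $M$ are finite so the matrix and its spectrum are well-defined, and the strict positivity $\sigma_t^2/N_t > 0$ is what makes $\Sigma^{1/2}$ invertible and legitimizes the symmetrizing transformation.

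I expect the only real obstacle to be the asymmetry of $\Sigma L$; once the similarity-to-symmetric trick is in place, the argument reduces to the standard fact that a Laplacian is positive semidefinite together with the observation that shifting a positive-semidefinite spectrum up by $1$ keeps it bounded away from zero. An alternative route that avoids eigenvalue talk entirely is to suppose $Mx = 0$, multiply through, and use $x^T \Sigma^{-1} M x = x^T \Sigma^{-1} x + \frac{\gamma}{T} x^T L x$; since $\Sigma^{-1}$ is positive definite and $L$ is positive semidefinite, this quadratic form is strictly positive for $x \neq 0$, contradicting $Mx = 0$. Either formulation completes the proof cleanly.
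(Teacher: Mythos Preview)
Your argument is correct, but it takes a different route from the paper. The paper writes out the entries of $B = I + \frac{\gamma}{T}\Sigma L$ explicitly, observes that the $t$th Gershgorin radius equals $B_{tt}-1$, and invokes the Gershgorin Circle Theorem to conclude that every eigenvalue of $B$ has strictly positive real part. Your approach instead symmetrizes $\Sigma L$ via the similarity $\Sigma L = \Sigma^{1/2}(\Sigma^{1/2}L\Sigma^{1/2})\Sigma^{-1/2}$ and uses the positive semidefiniteness of $L$ (and congruence) to conclude that the eigenvalues of $\Sigma L$ are real and nonnegative, hence those of $B$ are at least $1$. The Gershgorin argument is slightly more elementary and does not rely on $L$ being symmetric, so it goes through verbatim even for an asymmetric $A$. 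Your spectral argument, on the other hand, yields a sharper conclusion (all eigenvalues of $B$ are real and $\geq 1$, not merely in the right half-plane), and the alternative quadratic-form route you sketch---noting that $x^T\Sigma^{-1}Bx = x^T\Sigma^{-1}x + \tfrac{\gamma}{T}x^TLx > 0$ for $x\neq 0$---is arguably the cleanest of all three, since it dispenses with eigenvalue machinery entirely.
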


Note that the $(r,s)$th entry of $\frac{\gamma}{T} \Sigma L$ goes to $0$ as $N_t$ approaches infinity, and since matrix
inversion is a continuous operation, $\left(I + \frac{\gamma}{T} \Sigma L\right)^{-1} \to I$ in the norm. By the law of large numbers one can conclude that $Y^*$ asymptotically
approaches the true mean $\mu$.

MTA can also be applied to vectors.  Let $\mathbb{\bar{Y}}^* \in \R^{T\times d}$ be a matrix with $Y^*_t$ as its $t$th row and let $\mathbb{\bar{Y}} \in \R^{T\times d}$ be a matrix with $\bar{Y}_t \in \R^d$ as its $t$th row.   One can simply perform MTA on the vectorized form of $\mathbb{Y}^*$.

\subsection{Regularized Laplacian Kernel}
The MTA solution matrix $W = \left(I + \frac{\gamma}{T} \Sigma L\right)^{-1}$ is similar to the \emph{regularized Laplacian kernel} (RLK): $Q = (I + \gamma L)^{-1}$, introduced by Smola and Kondor \citep{Smola03}.  In the RLK, the graph Laplacian matrix $L$ is assumed to be symmetric, but the $\Sigma L$ in the MTA solution matrix is generally not symmetric.  The MTA solution matrix therefore generalizes the RLK.

Note that the term \emph{kernel} refers to a positive semi-definite matrix used in, for example, support vector machines \citep{HTF}.  The $(r,s)$th entry of any kernel matrix can be interpreted as a similarity between the $r$th and $s$th samples.  In this section, we will discuss and motivate the kind of similarity that is encoded by both the RLK and the MTA solution matrix $W$.

Chebotarev and Shamis \citep{Chebotarev06} studied matrices of the form $Q = (I + \gamma L)^{-1}$ in the context of answering the question ``given a graph, how should one evaluate the proximity between its vertices?''  They prove a number of properties that lead them to conclude that $Q_{ij}$ is a good measure of how \emph{accessible} $j$ is from $i$ when taking all possible paths into account (as opposed to just the direct path that $A_{ij}$ encodes).  In their own words, ``$Q_{ij}$ may be interpreted as the fraction of the connectivity of vertices $i$ and $j$ in the total connectivity of $i$ with all vertices.'' The following is a list of interesting properties of $Q$ from the work of Chebotarev and Shamis  when $A$ is symmetric and its entries are ``strictly positive'' \citep{Chebotarev06}:
\begin{itemize}
  \item $Q$ exists and has convex rows.
  \item $Q_{ii} > Q_{ij}$.
  \item Triangle inequality: $Q_{ij} + Q_{ik} - Q_{jk} \leq Q_{ii}$.
  \item The distance $d^\alpha_{ij} = \alpha(Q_{ii}^\alpha + Q_{jj}^\alpha - Q_{ij}^\alpha - Q_{ji}^\alpha)$ is a valid metric distance over vertices.
  \item $Q_{ij} = 0$ if and only if there exists no path between $i$ and $j$.
\end{itemize}
For intuition as to why $Q$ measures connectivity, consider the following expansion \citep{Berman79}:
$$(I+\gamma L)^{-1} = \sum_{k=0}^\infty (-\gamma L)^k.$$
This equality holds only if the right-hand side is convergent.  Thus, the RLK is a type of path counting with $-L$ instead of $A$ as the adjacency matrix, where paths of all possible lengths are taken into account, and longer paths are weighted: equally ($\gamma = 1$), less heavily ($\gamma < 1$), or more heavily ($\gamma > 1$).

The MTA solution matrix $\left(I + \frac{\gamma}{T} \Sigma L\right)^{-1}$ generalizes the RLK; the diagonal matrix $\Sigma$ left-multiplies the Laplacian, and the RLK is produced in the special case that $\Sigma = cI$ for any scalar $c$.  Using a different approach than Chebotarev and Shamis, we will prove in the next subsection that the convexity of the rows of $W$ still holds, assuming only non-negativity of the entries of $A$ (instead of strict positivity as in Chebotarev and Shamis).  (We did not investigate whether the other properties listed above still hold for the MTA solution.)

The RLK is one of many possible graph kernels.  To find the best one for a collaborative recommendation task, Fouss et al. \citep{Fouss06} empirically compared seven graph kernels.  They found that the best three kernels were the RLK, the pseudo-inverse of $L$, and the Markov diffusion kernel.  Yajima and Kuo \citep{Yajima06} tested various graph kernels in the context of a one-class SVM for the application of recommendation tasks.  They also found that the RLK was one of the top performers.

\subsection{Convexity of MTA Solution}
From inspection of (\ref{eqn:mta_sln}), it is clear that each 
of the elements of the MTA solution 
$Y^*$ is a linear combination of the single-task sample averages in $\bar{Y}$.  In fact, each MTA estimate is a convex combination of the single-task sample averages: \\

\noindent \textbf{Theorem} \emph{If  $\gamma \geq 0$, $0 \leq A_{rs} < \infty $ for all $r,s$ and $0 < \frac{\sigma_t^2}{N_t} < \infty$ for all $t$, then the MTA estimates $\{Y^*_t\}$ given in (\ref{eqn:mta_sln}) are a convex combination of the task sample averages $\{\bar{Y}_t\}$.}

\subsection{Analysis of the Two Task Case}\label{sec:optimal_a}
In this section we analyze the $T=2$ task case, with $N_1$ and $N_2$ samples for tasks 1 and 2 respectively.  Suppose  $\{Y_{1i}\}$ are iid  with finite mean $\mu_1$ and finite variance $\sigma_1^2$, and $\{Y_{2i}\}$ are iid with finite mean $\mu_2 = \mu_1 + \Delta$ and finite variance $\sigma_2^2$. Let the task-relatedness matrix be $A = [0 ~ a; a ~ 0]$, and without loss of generality, we fix  $\gamma = 1$.  Then the closed-form solution (\ref{eqn:mta_sln}) can be simplified:
\begin{equation}\label{eqn:two_task}
Y_1^* = \left( \frac{ T + \frac{\sigma_2^2}{N_2}a }{ T + \frac{\sigma_1^2}{N_1}a + \frac{\sigma_2^2}{N_2}a  } \right) \bar{Y}_1
       + \left( \frac{\frac{\sigma_1^2}{N_1}a }{T + \frac{\sigma_1^2}{N_1}a  + \frac{\sigma_2^2}{N_2}a  } \right) \bar{Y}_2.
\end{equation}
The mean squared error of $Y_1^*$ is
\begin{align*}
\MSE[Y_1^*] = \frac{\sigma_1^2}{N_1}
\left( \frac{ T^2 + 2T\frac{\sigma_2^2}{N_2}a +   \frac{\sigma_1^2\sigma_2^2}{N_1N_2}a^2 +  \frac{\sigma_2^4}{N^2_2}a^2}
     {(T + \frac{\sigma_1^2}{N_1}a  + \frac{\sigma_2^2}{N_2}a )^2  } \right)
     +   \frac{\Delta^2 \frac{\sigma_1^4}{N_1^2}a^2 }{(T + \frac{\sigma_1^2}{N_1}a  + \frac{\sigma_2^2}{N_2}a )^2  }.
\end{align*}
Next, we compare the MTA estimate to the sample average $\bar{Y}_1$, which is the maximum likelihood estimate of the mean for many distributions.\footnote{The uniform distribution is perhaps the simplest example where the sample average is not the maximum likelihood estimate of the mean. For more examples, see Sec. 8.18 of \citet{Counterexamples}.} The MSE of the single-task sample average $\bar{Y}_1$ is $\frac{\sigma_1^2}{N_1}$, and thus
\begin{align}\label{eqn:mseLower}
\MSE[Y_1^*]  < \MSE[\bar{Y}_1] \textrm{ if } \Delta^2  <  \frac{4}{a} + \frac{\sigma_1^2}{N_1} + \frac{\sigma_2^2}{N_2},
\end{align}
Thus the MTA estimate of the first mean has lower MSE than the sample average estimate if the squared mean-separation $\Delta^2$ is small compared to the summed variances of the sample means.  See Figure \ref{fig:twoTaskCase} for an illustration.

Note that as $a$ approaches 0 from above, the term $4/a$ in (\ref{eqn:mseLower}) approaches infinity, which means that a small amount of regularization can be helpful even when the difference between the task means $\Delta$ is large.

\begin{figure}[t!]
\centering
\includegraphics[width = 6.0in]{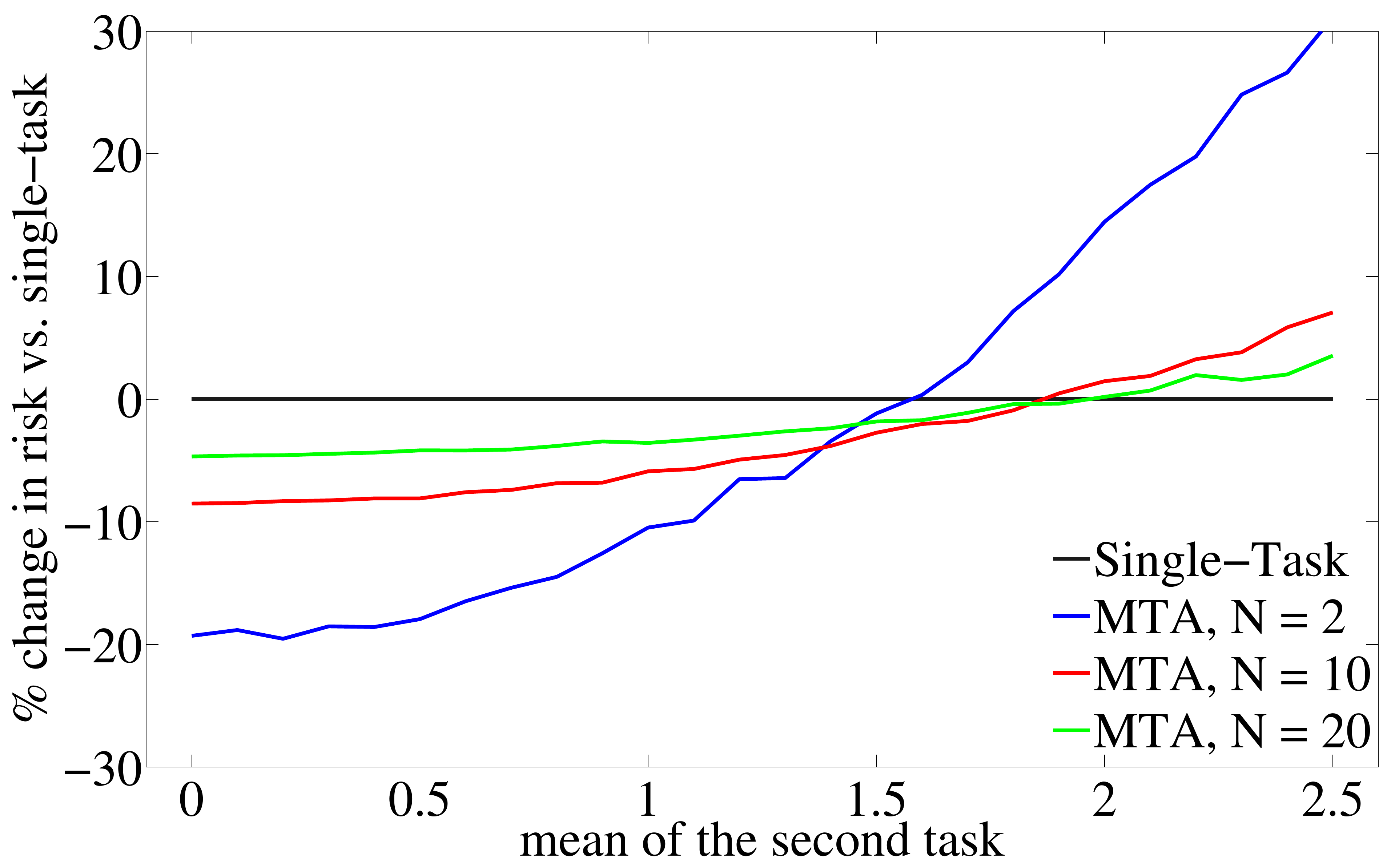} \\
\caption{\footnotesize Plot shows the percent change in average risk for two tasks (averaged over 10,000 runs of the simulation). For each task there are $N$ IID samples, for $N=2,10,20$.  The first task generates samples from a standard Gaussian. The second task generates samples from a Gaussian with $\sigma^2 = 1$ and varying mean value, as marked on the x-axis. The symmetric task-relatedness value was fixed at $a = 1$ (note this is generally not the optimal value). One sees that given $N=2$ samples from each Gaussian, the MTA estimate is better if the Gaussians are closer than 2 units apart. Given $N=20$ samples from each Gaussian, the MTA estimate is better if the Gaussians are closer than 1.5 units apart. In the extreme case that the two Gaussians have the same mean ($\mu_1=\mu_2 =0$), then with this suboptimal choice of $a=1$, MTA provides a $20\%$ win for $N=2$ samples,  and a 5$\%$ win for $N=20$ samples.} \label{fig:twoTaskCase}
\end{figure}

\subsection{Optimal Task Relatedness $A$ for $T=2$}
We analyze the optimal choice of $a$ in the task-similarity matrix $A = [0 ~ a; a ~ 0]$. The risk is the sum of the mean squared errors: $$R(\mu,Y^*) = \MSE[Y_1^*] + \MSE[Y_2^*],$$ which is a convex, continuous, and differentiable function of $a$, and therefore the first derivative can be used to specify the optimal value $a^*$, when all the other variables are fixed.  Minimizing the risk $\MSE[Y_1^*] + \MSE[Y_2^*]$ w.r.t. $a$ one obtains the following solution:
\begin{equation}\label{eqn:optimal_a}
a^*  = \frac{2}{\Delta^2},
\end{equation}
which is always non-negative, as was assumed.   This result is key because it specifies that the optimal task-similarity $a^*$ ideally should measure the inverse of the squared task mean-difference.  Further, the optimal task-similarity is independent of the number of samples $N_t$ or the sample variance $\sigma_t^2$, as these are accounted for in $\Sigma$.   Note that $a^*$ also minimizes the functions $\MSE[Y_1^*]$ and $\MSE[Y_2^*]$, separately.

Analysis of the second derivative shows that this minimizer always holds for the cases of interest (that is, for $N_1, N_2 \geq 1$).
The effect on the risk of the choice of $a$ and the optimal $a^*$ is illustrated in Figure \ref{fig:OptimalRelatedness}.
\begin{figure}[t!]
\centering
\includegraphics[width = 6.0in]{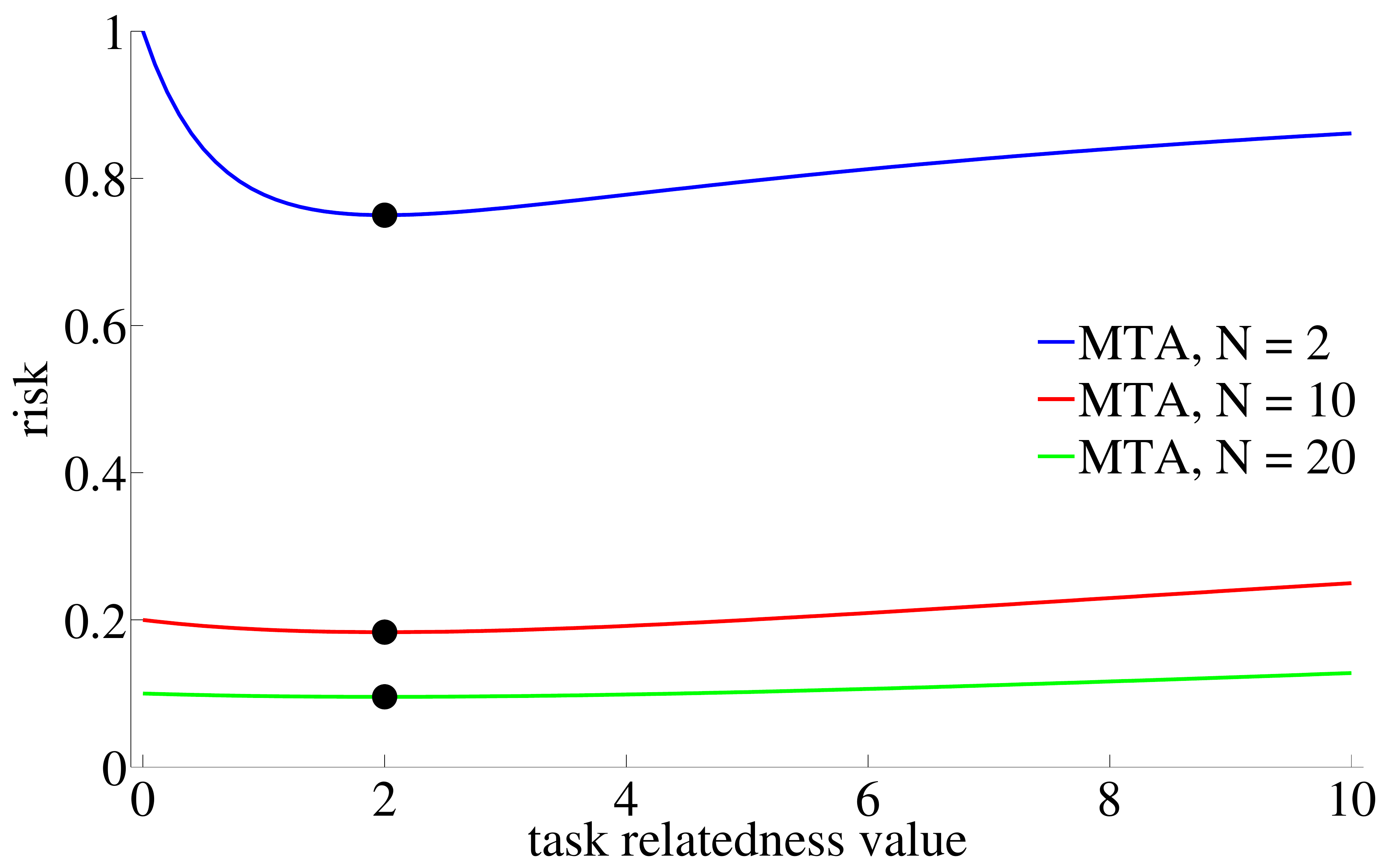} \\
\caption{\footnotesize Plot shows the risk for two tasks, where the task samples were drawn IID from Gaussians $\mathcal{N}(0,1)$ and $\mathcal{N}(1,1)$. The task-relatedness value $a$ was varied as shown on the x-axis. The minimum expected squared error is marked by a $*$, is independent of $N$ and matches the optimal task-relatedness value given by (\ref{eqn:optimal_a}).} \label{fig:OptimalRelatedness}
\end{figure}
The optimal two-task similarity given in (\ref{eqn:optimal_a}) requires knowledge of the true means $\mu_1$ and $\mu_2$.  These are, in practice, unavailable.  What similarity should be used then?  A straightforward approach is to use single-task estimates instead:
\begin{equation*}
\hat{a}^* = \frac{2}{(\bar{y}_1-\bar{y}_2)^2},
\end{equation*}
And to use maximum likelihood estimates $\hat{\sigma}_t^2$ to form the matrix $\hat{\Sigma}$.  This data-dependent approach is analogous to empirical Bayesian methods in which prior parameters are estimated from data \citep{Casella:85}.

\subsection{Estimating $A$ from Data for Arbitrary $T$}\label{sec:estimation}
Based on our analysis in the preceding sections of the optimal $A$ for the two-task case, we propose two methods to estimate $A$ from data for arbitrary $T>1$. The first method is designed to minimize the approximate risk using a constant similarity matrix. The second method provides a minimax estimator.  With both methods one can take advantage of the Sherman-Morrison formula \citep{ShermanMorrison} to avoid taking the matrix inverse or solving a set of linear equations in (\ref{eqn:mta_sln}), resulting in an $O(T)$ computation time for $Y^*$.

\subsubsection{Constant MTA}
Recalling that $E[\bar{Y}\bar{Y}^T] = \mu\mu^T + \Sigma$, the \emph{risk} of estimator $\hat{Y} = W\bar{Y}$ of unknown parameter vector $\mu$ for the squared loss is the sum of the mean squared errors:
\begin{align}\label{eqn:risk}
R(\mu,W\bar{Y}) &= E[(W\bar{Y} - \mu)^T(W\bar{Y} - \mu)] \nonumber\\
 &= \tr(W\Sigma W^T) + \mu^T(I-W)^T(I-W)\mu.
\end{align}
One approach to generalizing the results of Section \ref{sec:optimal_a} to arbitrary $T$ is to try to find a symmetric, non-negative matrix $A$ such that the (convex, differentiable) risk $R(\mu,W\bar{Y})$ is minimized for $W =\left(I + \frac{\gamma}{T} \Sigma L\right)^{-1}$ (recall $L$ is the graph Laplacian of $A$).  The problem with this approach is two-fold:  (i) the solution is not analytically tractable for $T>2$ and (ii) an arbitrary $A$ has $T(T-1)$ degrees of freedom, which is considerably more than the number of means we are trying to estimate in the first place. To avoid these problems, we generalize the two-task results by constraining $A$ to be a scaled constant matrix $A = a\one\one^T$, and find the optimal $a^*$ that minimizes the risk in (\ref{eqn:risk}).  In addition, w.l.o.g. we set $\gamma$ to 1, and for analytic tractability we assume that all the tasks have the same variance, estimating  $\Sigma$ as $\frac{\tr(\Sigma)}{T}I$. Then it remains to solve:
$$a^* = \argmin_a R\left(\mu, \left(I + \frac{1}{T} \frac{\tr(\Sigma)}{T} L(a\one\one^T)\right)^{-1}\bar{Y}\right),$$
which has the solution
$$a^* = \frac{2}{\frac{1}{T(T-1)}\sum_{r=1}^T\sum_{s=1}^T (\mu_r - \mu_s)^2},$$
which reduces to the optimal two task MTA solution (\ref{eqn:optimal_a}) when $T=2$. In practice, one of course does not have $\{\mu_r\}$ as these are precisely the values one is trying to estimate.  So, to estimate $a^*$ we use the sample means $\{\bar{y}_r\}$:
$$\hat{a}^* = \frac{2}{\frac{1}{T(T-1)}\sum_{r=1}^T\sum_{s=1}^T (\bar{y}_r - \bar{y}_s)^2}.$$

Using this optimal estimated \emph{constant} similarity and an estimated covariance matrix $\hat{\Sigma}$ produces what we refer to as the \emph{constant MTA} estimate
\begin{equation}
Y^* = \left(I + \frac{\gamma}{T}\hat{\Sigma} L(\hat{a}^* \one \one^T)\right)^{-1} \bar{Y}.\label{eqn:constantMTA}
\end{equation}
Note that we made the assumption that the entries of $\Sigma$ were the same in order to be able to compute the constant similarity $a^*$, but we do not need nor suggest that assumption when using $a^*$ in (\ref{eqn:constantMTA}).

To compute this estimate one needs the diagonal matrix $\Sigma$, which in practice also must be estimated.

\subsubsection{Minimax MTA}
Bock's James-Stein estimator is \emph{minimax}, that is, it minimizes the worst-case loss, and not necessarily the expected risk \citep{LehmannCasella}.   This leads to a more conservative use of regularization.  In this section, we derive a  minimax version of MTA for arbitrary $T$ that prescribes less regularization than constant MTA.  Formally, an estimator $Y^M$ of $\mu$ is called minimax if it minimizes the maximum risk:
$$\inf_{\hat{Y}} \sup_{\mu} R(\mu,\hat{Y}) = \sup_\mu R(\mu,Y^M).$$
Let $r(\pi,\hat{Y})$ be the average risk of estimator $\hat{Y}$ w.r.t. a prior $\pi(\mu)$ such that  $r(\pi,\hat{Y}) = \int R(\mu,\hat{Y})\pi(\mu)d\mu$.  The Bayes estimator $Y^\pi$ is the estimator that minimizes the average risk, and the Bayes risk $r(\pi,Y^\pi)$ is the average risk of the Bayes estimator.  A prior distribution $\pi$ is called least favorable if
$r(\pi,Y^\pi) > r(\pi',Y^{\pi'})$ for all priors $\pi'$.

First, we will specify minimax MTA for the $T=2$ case.  To find a minimax estimator $Y^M$ it is sufficient to show that \emph{(i)} $Y^M$ is a Bayes estimator w.r.t. the least favorable prior (LFP) and \emph{(ii)} it has constant risk \citep{LehmannCasella}.  To find a LFP, we first need to specify a constraint set for $\mu_t$: we use an interval: $\mu_t \in [b_l,b_u],$ for all $t$, where $b_l \in \R$ and $b_u \in \R$. With this constraint set the minimax estimator is (see appendix for details):
\begin{equation*}
Y^M = \left(I + \frac{2\gamma}{T(b_u - b_l)^2} \Sigma L(\one\one^T)\right)^{-1}\bar{Y},
\end{equation*}
which reduces to (\ref{eqn:optimal_a}) when $T=2$.  This minimax analysis is only valid for the case when $T=2$, but we found that the following extension of minimax MTA to larger $T$ worked well in simulations and applications for any $T \geq 2$.  To estimate $b_u$ and $b_l$ from data we assume the unknown $T$ means are drawn from a uniform distribution and use maximum likelihood estimates of the lower and upper endpoints for the support: $$\hat{b}_l = \min_t \bar{y}_t ~~ \text{and} ~~ \hat{b}_u = \max_t \bar{y}_t.$$
Thus, in practice, \emph{minimax MTA} is
\begin{equation*}
Y^M = \left(I + \frac{2\gamma}{T(\hat{b}_u - \hat{b}_l)^2} \hat{\Sigma} L(\one\one^T)\right)^{-1}\bar{Y},
\end{equation*}

\subsubsection{Computational Efficiency of Constant and Minimax MTA}
Both the constant MTA and minimax MTA weight matrices can be written as
\begin{align*}
(I + c\Sigma L(\one\one^T))^{-1} &= (I + c\Sigma (TI - \one\one^T ))^{-1}\\
&= (I + cT \Sigma - c\Sigma \one\one^T )^{-1}\\
&= (Z - x\one^T)^{-1},
\end{align*}
where $c$ is different for constant MTA and minimax MTA, $Z = I + cT \Sigma$, $x = c\Sigma\one$.  The matrix $Z$ is diagonal (since $\Sigma$ is diagonal), and thus the Sherman-Morrison formula \citep{ShermanMorrison} can be used to find the inverse:
$$(Z - x\one^T)^{-1} = Z^{-1} + \frac{Z^{-1}x\one^TZ^{-1}}{1+\one^TZ^{-1}x}.$$
Since $Z$ is diagonal, $Z^{-1}$ can be computed in $O(T)$ time, and so can $Z^{-1}x$.  Thus, the entire computation $W\bar{Y}$ can be done in $O(T)$ time for constant MTA and minimax MTA.

\subsection{Generality of Matrices of MTA Form} \label{sec:other_ests}
Figure \ref{fig:sets_of_estimators} is a visual summary of the sets of estimators of type $\hat{Y} = W\bar{Y}$, where $W$ is a $T \times T$ matrix.  The pink region represents estimators of the form $\hat{Y} = W\bar{Y}$, with right-stochastic $W$.  MTA estimators are all within the green region, and many well-known estimators (such as the James-Stein Estimator and its variants) fall within the purple region.  In this section we will prove that the purple region is a strict subset of the the green region with a proposition.  In other words, we will show that MTA generalizes many estimators of interest, such as estimators that regularize single-task estimates of the mean to the pooled mean or the average of means.

\begin{figure}[th!]
\centering
\includegraphics[width = 6in]{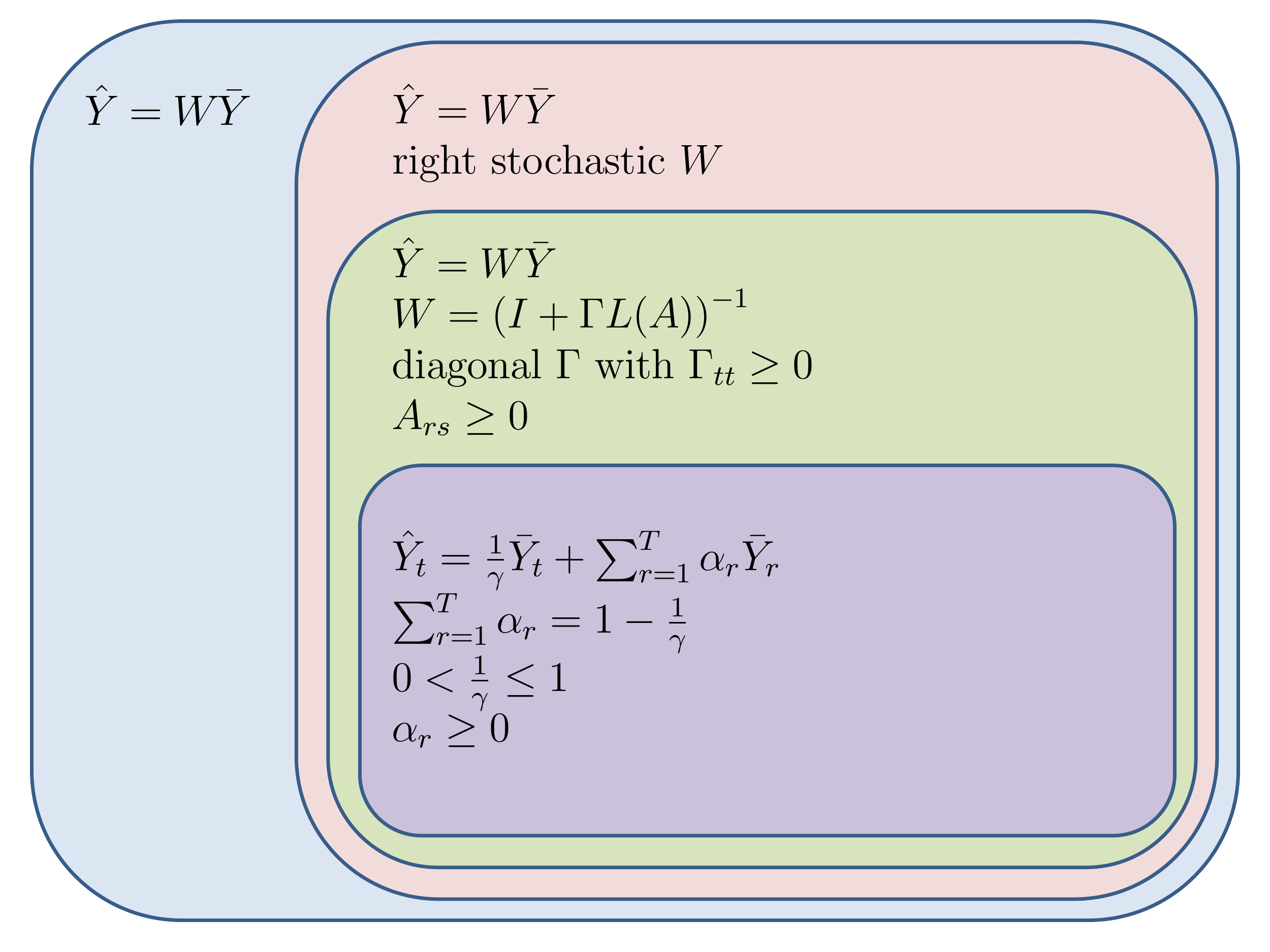}\\
\caption{An illustration of the set membership properties of various estimators of the type $\hat{Y} = W\bar{Y}$.} \label{fig:sets_of_estimators}
\end{figure}

Specifically, the proposition will establish that familiar regularized estimates of $\mu$ can be rewritten in MTA form for specific choices of (or assumptions about) $A$, $\gamma$, and $\Sigma$.  Note that the covariance $\Sigma$ is also a ``choice" because some classic estimators assume $\Sigma = I$.  First, recall the MTA solution:
\begin{equation*}
Y^* = \left(I + \frac{\gamma}{T} \Sigma L\right)^{-1}\bar{Y}.
\end{equation*}
In the following sections we refer to matrices of \emph{MTA form}.  In the most general case, this form is
\begin{equation}\label{eq:mta_general}
\left(I + \Gamma L(A) \right)^{-1},
\end{equation}
where $A$ is a matrix with all non-negative entries, and $\Gamma$ is a diagonal matrix with all non-negative entries.

\begin{proposition}
The set of estimators $W\bar{Y}$ where $W$ is of MTA form as per (\ref{eq:mta_general}) is strictly larger than the set of estimators that regularize the single-task estimates as follows:
\begin{align*}
\hat{Y}_t = \frac{1}{\gamma} \bar{y}_t + \sum_{r=1}^T \alpha_r \bar{Y}_r,
\end{align*}
where $\sum_{r=1}^T \alpha_r = 1-\frac{1}{\gamma}$, $0 < \frac{1}{\gamma} \leq 1$, and $\alpha_r \geq 0$, $\forall r$.
\end{proposition}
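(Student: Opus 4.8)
The plan is to prove the two inclusions separately: first that every estimator of the simpler, convex-combination form is of MTA form (containment), and then that some MTA-form estimator is \emph{not} of the simpler form (strictness). Throughout I would work with the matrix representation. Since $\sum_{r}\alpha_r\bar{Y}_r$ does not depend on $t$, the map $\bar{Y}\mapsto\hat{Y}$ is $\hat{Y}=W\bar{Y}$ with $W=\frac{1}{\gamma}I+\one\alpha^T$, where $\alpha=(\alpha_1,\dots,\alpha_T)^T\geq\zero$ and $\one^T\alpha=1-\frac{1}{\gamma}$. The containment goal is then to exhibit a non-negative $A$ and a non-negative diagonal $\Gamma$ with $(I+\Gamma L(A))^{-1}=W$, equivalently $\Gamma L(A)=W^{-1}-I$.

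For the containment I would first invert $W$ using the Sherman--Morrison formula with base matrix $\frac{1}{\gamma}I$ and rank-one update $\one\alpha^T$. The denominator is $1+\gamma\,\one^T\alpha$, and substituting $\one^T\alpha=1-\frac{1}{\gamma}$ collapses it to $\gamma$, giving the clean expression $W^{-1}=\gamma(I-\one\alpha^T)$, so $W^{-1}-I=(\gamma-1)I-\gamma\,\one\alpha^T$. I would then take the explicit witness $\Gamma=I$ and $A=\gamma(\one\alpha^T-\diag(\alpha))$, i.e. $A_{ts}=\gamma\alpha_s$ for $s\neq t$ and $A_{tt}=0$; this is non-negative because $\gamma\geq 1$ and $\alpha\geq\zero$. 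Verifying $L(A)=W^{-1}-I$ reduces to matching entries: the off-diagonal entries agree immediately since $L_{ts}=-A_{ts}=-\gamma\alpha_s$, and the diagonal entries match because $D_{tt}=\sum_{s\neq t}\gamma\alpha_s=(\gamma-1)-\gamma\alpha_t$, where the constraint $\one^T\alpha=1-\frac{1}{\gamma}$ is used a second and crucial time. This establishes that the simpler set is contained in the MTA set.

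For strictness I would observe the structural invariant that characterizes the simpler form: from $W=\frac{1}{\gamma}I+\one\alpha^T$ we have $W_{ts}=\alpha_s$ for every $t\neq s$, so within each column the off-diagonal entries are all equal; producing an MTA-form matrix that violates this suffices. Taking $T=3$, $\Gamma=I$, and $A$ the adjacency matrix of the path graph $1\!-\!2\!-\!3$, one computes $L(A)=\diag(1,2,1)-A$ and inverts $I+L(A)$ to obtain the MTA-form matrix $W=(I+L(A))^{-1}=\frac{1}{8}\begin{pmatrix} 5 & 2 & 1 \\ 2 & 4 & 2 \\ 1 & 2 & 5 \end{pmatrix}$. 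Its first column has off-diagonal entries $W_{21}=\tfrac{2}{8}$ and $W_{31}=\tfrac{1}{8}$, which are unequal, so $W$ cannot be written as $\frac{1}{\gamma}I+\one\alpha^T$ for any admissible $\gamma,\alpha$. Hence the MTA set strictly contains the simpler set. The main obstacle is the containment's diagonal-matching step, which is exactly where the constraint $\sum_r\alpha_r=1-\frac{1}{\gamma}$ is indispensable; conceptually, the other delicate point is identifying the correct invariant (column-constant off-diagonals) and a clean asymmetric-accessibility witness that breaks it, whereas the Sherman--Morrison inversion itself is routine.
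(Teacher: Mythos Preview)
Your proof is correct and follows essentially the same approach as the paper: both write $W=\tfrac{1}{\gamma}I+\one\alpha^T$, apply Sherman--Morrison (using $\one^T\alpha=1-\tfrac{1}{\gamma}$ to collapse the denominator), and recognize $W^{-1}-I=(\gamma-1)I-\gamma\,\one\alpha^T$ as a scaled Laplacian of a rank-one similarity matrix. The only cosmetic difference is that the paper takes $\Gamma=\gamma I$ and $A=\one\alpha^T$, whereas you absorb the $\gamma$ into $A$ and zero its diagonal; these yield the same $\Gamma L(A)$.

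Where you go beyond the paper is the strictness argument: the paper simply asserts ``by inspection'' that not every MTA-form matrix can be written as $(I+\gamma L(\one\alpha^T))^{-1}$, while you identify the precise obstruction (equal off-diagonal entries within each column of $W$) and exhibit an explicit $3\times 3$ path-graph counterexample that violates it. That is a genuine improvement in rigor over the paper's treatment.
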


\begin{corollary}
Estimators which regularize the single task estimate towards the pooled mean such that they can be written
\begin{align*}
\check{Y}_t = \lambda \bar{Y}_t + \frac{1-\lambda}{\sum_{r=1}^T N_r} \sum_{s=1}^T\sum_{i=1}^{N_s} Y_{si},
\end{align*}
for $\lambda \in (0,1]$ can also be written in MTA form as
$$\check{Y} = \left(I + \frac{1-\lambda}{\lambda \mathbf{N}^T\one} L(\one \mathbf{N}^T)\right)^{-1}\bar{Y},$$
where $\mathbf{N}$ is a $T$ by $1$ vector with $N_t$ as its $t$th entry, with corresponding choices if $A$ and $\Gamma$ obtained by visual pattern matching to (\ref{eq:mta_general}).
\end{corollary}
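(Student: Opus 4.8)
The plan is to rewrite the target estimator as a single matrix acting on $\bar{Y}$, compute the graph Laplacian $L(\one\mathbf{N}^T)$ in closed form, and then verify by direct multiplication that the proposed MTA-form matrix is its inverse, with the essential simplification coming from an idempotency relation.

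First I would eliminate the raw samples in favor of the sample averages. Since $\sum_{i=1}^{N_s} Y_{si} = N_s \bar{Y}_s$, the pooled mean can be written as $\frac{1}{\mathbf{N}^T\one}\sum_{s=1}^T N_s \bar{Y}_s = \frac{\mathbf{N}^T\bar{Y}}{\mathbf{N}^T\one}$. Collecting the $T$ scalar equations $\check{Y}_t = \lambda \bar{Y}_t + (1-\lambda)\frac{\mathbf{N}^T\bar{Y}}{\mathbf{N}^T\one}$ into vector form gives $\check{Y} = M\bar{Y}$ with
$$M = \lambda I + \frac{1-\lambda}{\mathbf{N}^T\one}\one\mathbf{N}^T.$$
It then suffices to show that $M$ equals the claimed inverse, since $M = W\bar{Y}$'s coefficient matrix already exhibits it as an estimator of type $W\bar{Y}$.

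Next I would compute the Laplacian of $A = \one\mathbf{N}^T$. Its $(r,s)$ entry is $N_s$, so every row sums to $\mathbf{N}^T\one$ and the degree matrix is $D = (\mathbf{N}^T\one)I$; hence $L(\one\mathbf{N}^T) = (\mathbf{N}^T\one)I - \one\mathbf{N}^T$. Substituting this into the matrix to be inverted and writing $P := \frac{\one\mathbf{N}^T}{\mathbf{N}^T\one}$ yields
$$I + \frac{1-\lambda}{\lambda\mathbf{N}^T\one}L(\one\mathbf{N}^T) = \frac{1}{\lambda}\left(I - (1-\lambda)P\right).$$
The crux of the argument is that $P$ is idempotent: $P^2 = \frac{\one(\mathbf{N}^T\one)\mathbf{N}^T}{(\mathbf{N}^T\one)^2} = P$. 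Writing $M = \lambda I + (1-\lambda)P$ and multiplying the two expressions, every cross term involving $P^2$ collapses to $P$, and the $P$-coefficients cancel identically, leaving $M\cdot\frac{1}{\lambda}\left(I-(1-\lambda)P\right) = I$. This establishes that $M$ is exactly the proposed inverse, so $\check{Y}$ is of MTA form.

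The final bookkeeping step is to confirm the form matches (\ref{eq:mta_general}): the similarity matrix $A = \one\mathbf{N}^T$ has non-negative entries since $N_t \geq 1$, and $\Gamma = \frac{1-\lambda}{\lambda\mathbf{N}^T\one}I$ is diagonal with non-negative entries for $\lambda \in (0,1]$, so the membership claimed by the Proposition applies directly. I expect no serious obstacle: the only point requiring care is the idempotency computation and verifying that the $P$-terms cancel; everything else is routine matrix algebra.
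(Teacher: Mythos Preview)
Your proposal is correct. The paper, however, does not prove the corollary directly: it derives it as an immediate specialization of the preceding Proposition, taking $\frac{1}{\gamma}=\lambda$ and $\alpha_r=\frac{(1-\lambda)N_r}{\mathbf{N}^T\one}$ (so that $\sum_r\alpha_r=1-\lambda=1-\frac{1}{\gamma}$), and then using $L(c\one\mathbf{N}^T)=cL(\one\mathbf{N}^T)$. The Proposition itself is proved via the Sherman--Morrison formula applied to $(\frac{1}{\gamma}I+\one\alpha^T)^{-1}$.

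Your route is genuinely different: you compute $L(\one\mathbf{N}^T)$ explicitly, rewrite both the target matrix $M$ and the matrix to be inverted in terms of the rank-one idempotent $P=\frac{\one\mathbf{N}^T}{\mathbf{N}^T\one}$, and verify the inverse by direct multiplication. This is more elementary and fully self-contained---it avoids Sherman--Morrison entirely and does not rely on the Proposition---at the cost of being tailored to this specific $A=\one\mathbf{N}^T$. The paper's route buys generality (one argument covers all $\alpha$), while yours buys transparency for this particular case; either is perfectly adequate here.
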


\begin{corollary}
Estimators which regularize the single task estimate towards the average of means (AM) such that they can be written
\begin{align*}
\breve{Y}_t = \lambda \bar{Y}_t + \frac{1-\lambda}{T} \sum_{t=1}^T \bar{Y}_t,
\end{align*}
for $\lambda \in (0,1]$, can also be written in MTA form as
\begin{align*}
\breve{Y} = \left(I+\frac{1-\lambda}{\lambda T}L(\one\one^T)\right)^{-1}\bar{Y},
\end{align*}
with corresponding choices if $A$ and $\Gamma$ obtained by visual matching to (\ref{eq:mta_general}).
\end{corollary}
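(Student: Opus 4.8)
The plan is to convert the average-of-means (AM) shrinkage estimator into matrix form $\breve{Y} = M\bar{Y}$, identify the mixing matrix $M$ explicitly, and then verify that the proposed MTA-form matrix is precisely $M^{-1}$. First I would rewrite the scalar definition in vector notation. Since $\frac{1}{T}\sum_{s=1}^T \bar{Y}_s = \frac{1}{T}\one^T\bar{Y}$, the estimator becomes
$$\breve{Y} = \lambda \bar{Y} + \frac{1-\lambda}{T}\one\one^T\bar{Y} = M\bar{Y}, \qquad M = \lambda I + \frac{1-\lambda}{T}\one\one^T.$$
The corollary is therefore equivalent to the single matrix identity $M^{-1} = I + \frac{1-\lambda}{\lambda T}L(\one\one^T)$.

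Next I would compute the graph Laplacian of the all-ones similarity matrix. With $A = \one\one^T$ every row sums to $T$, so the degree matrix is $D = TI$ and $L(\one\one^T) = D - A = TI - \one\one^T$. Substituting this into the proposed MTA-form matrix and collecting the identity and $\one\one^T$ terms gives
$$I + \frac{1-\lambda}{\lambda T}\bigl(TI - \one\one^T\bigr) = \frac{1}{\lambda}I - \frac{1-\lambda}{\lambda T}\one\one^T.$$
It then remains to check that this rank-one update of a scaled identity is the inverse of $M$. I would do this by direct multiplication (or equivalently by Sherman-Morrison, since both $M$ and its claimed inverse have the form $cI + d\,\one\one^T$). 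The only nontrivial algebraic fact needed is the idempotence relation $\one\one^T\one\one^T = (\one^T\one)\one\one^T = T\,\one\one^T$; using it, the identity parts multiply to $I$, and the coefficient of $\one\one^T$ in the product $M M^{-1}$ collapses to $\frac{1-\lambda}{T}\bigl(-1 + \tfrac{1}{\lambda} - \tfrac{1-\lambda}{\lambda}\bigr) = 0$, leaving $M M^{-1} = I$.

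Finally I would confirm that the resulting representation genuinely lies in the MTA form (\ref{eq:mta_general}): matching against $\left(I + \Gamma L(A)\right)^{-1}$ identifies $A = \one\one^T$, which has non-negative entries, and $\Gamma = \frac{1-\lambda}{\lambda T}I$, which is diagonal with non-negative entries precisely because $\lambda \in (0,1]$ guarantees $1-\lambda \geq 0$ and $\lambda > 0$. I would also note the boundary case $\lambda = 1$, where $\Gamma = 0$ and both sides reduce to the identity, so the estimator collapses to the single-task sample average $\bar{Y}$, as expected. There is no genuine obstacle here: the argument is a short verification, and the only point requiring care is bookkeeping of the scalar coefficients when reducing the expressions — the $\one\one^T$ idempotence identity does all the structural work.
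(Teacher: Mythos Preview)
Your proof is correct and follows essentially the same approach as the paper: write the estimator as $M\bar{Y}$ for a rank-one perturbation of a scaled identity, and verify that $M^{-1}$ has the stated MTA form. The paper obtains this corollary by specializing the preceding proposition (whose proof inverts $\frac{1}{\gamma}I + \one\alpha^T$ via Sherman--Morrison and recognizes $I + \gamma L(\one\alpha^T)$), whereas you verify the identity directly for $\alpha = \tfrac{1-\lambda}{T}\one$; the underlying computation is the same.
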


Note that the proof of proposition in the appendix uses MTA form with \emph{asymmetric} similarity matrix $A = \one\alpha^T$.  And, indeed, there is nothing about the MTA solution that requires $A$ to be symmetric.
Initially, we constrained $A$ to be symmetric because of the form of the regularizer in the objective (\ref{eqn:mta}):
$$\frac{1}{2}\sum_{r=1}^T\sum_{s=1}^T A_{rs}(\hat{Y}_r - \hat{Y}_s)^2 = \hat{Y}^T L((A+A^T)/2)\hat{Y}^T.$$
However, for asymmetric $A$ one can simply write the regularizer in matrix form
$$\hat{Y}^T L(A)\hat{Y}^T,$$
even though this regularizer with asymmetric $A$ has a less-than-intuitive sum form:
\begin{align*}
\hat{Y}^T L(A)\hat{Y} &= \hat{Y}^TL((A+A^T)/2)\hat{Y} + \frac{1}{2}\hat{Y}^TD(A)\hat{Y} - \frac{1}{2}\hat{Y}^TD(A^T)\hat{Y}\\
&= \frac{1}{2}\sum_{r=1}^T\sum_{s=1}^T A_{rs}(\hat{Y}_r - \hat{Y}_s)^2 + \frac{1}{2}\sum_{r=1}^T \left(\sum_{s=1}^T A_{rs}\right)\hat{Y}_r^2 - \frac{1}{2}\sum_{r=1}^T \left(\sum_{s=1}^T A_{sr}\right)\hat{Y}_r^2.
\end{align*}

\subsection{Bayesian Interpretation of MTA}\label{sec:bayes}
The MTA estimates from (\ref{eqn:mta}) can be interpreted as jointly maximizing the likelihood of $T$ Gaussian distributions with a joint Gaussian Markov random field (GMRF) prior \citep{RueHeld} on the solution.  In MTA, the precision matrix $\Sigma^{-1}$ is $L$, the graph Laplacian of the similarity matrix, and is thus positive semi-definite (and not strictly positive definite); GMRFs with PSD inverse covariances are called intrinsic GMRFs (IGMRFs).

GMRFs and IGMRFs are commonly used in graphical models, wherein the sparsity structure of the precision matrix (which corresponds to conditional independence between variables) is exploited for computational tractability.  Because MTA allows for arbitrary but non-negative similarities between any two tasks, the precision matrix does not (in general) have zeros on the off-diagonal, and it is not obvious how additional sparsity structure of $L$ would be of help computationally.

\section{Simulations}
\label{sec:simulations}
As we have shown in the previous section, MTA is a theoretically rich formulation.  In the next two sections we test the usefulness of constant MTA and minimax MTA given data.

First, we test estimators using simulations so that comparisons to ground truth can be made.  The simulated data was generated from either a Gaussian or uniform hierarchical process with many sources of randomness (detailed below), in an attempt to imitate the uncertainty of real applications, and thereby determine if these are good general-purpose estimators.  The reported results demonstrate that MTA works well when averaged over many different draws of means, variances, and numbers of samples.

Simulations are run for $T = \{2,5,25,500\}$ tasks, and parameters were set so that the variances of the distribution of the true means are the same in both uniform and Gaussian simulations.  Simulation results are reported in Figures \ref{fig:sim_gauss_1} and \ref{fig:sim_gauss_2} for the Gaussian experiments, and Figures \ref{fig:sim_uniform_1} and \ref{fig:sim_uniform_2} for the uniform experiments.   The Gaussian simulations were run as follows:
\begin{enumerate}
\item Fix $\sigma_\mu^2$, the variance of the distribution from which $\{\mu_t\}$ are drawn.
\item For $t=1,\ldots,T$:
    \begin{enumerate}
    \item Draw the mean of the $t$th distribution $\mu_t$ from a Gaussian  with mean 0 and variance $\sigma_\mu^2$.
    \item Draw the variance of the $t$th distribution $\sigma^2_t \sim \text{Gamma}(0.9,1.0) + 0.1$\footnote{The $0.1$ is added to ensure that variance is never zero.}.
    \item Draw the number of samples to be drawn from the $t$th distribution $N_t$ from an integer uniform distribution in the range of $2$ to $100$.
    \item Draw $N_t$ samples $y_{ti} \sim \mathcal{N}(\mu_t,\sigma_t^2)$.
    \end{enumerate}
\end{enumerate}

The uniform simulations were run as follows:
\begin{enumerate}
\item Fix $\sigma_\mu^2$, the variance of the distribution from which $\{\mu_t\}$ are drawn.
\item For $t=1,\ldots,T$:
    \begin{enumerate}
    \item Draw the mean of the $t$th distribution $\mu_t$ from a uniform distribution with mean 0 and variance $\sigma_\mu^2$.
    \item Draw the variance of the $t$th distribution  $\sigma^2_t \sim U(0.1,2.0)$.
    \item Draw the number of samples to be drawn from the $t$th distribution $N_t$ from an integer uniform distribution in the range of $2$ to $100$.
    \item Draw $N_t$ samples  $y_{ti} \sim U[\mu_t - \sqrt{3\sigma_t^2}, \mu_t + \sqrt{3\sigma_t^2}]$.
    \end{enumerate}
\end{enumerate}


We compared constant MTA and minimax MTA to single-task sample averages and to the James-Stein estimator given in (\ref{eqn:jsBock}) (modified to with $T$ instead of the effective dimension).  We also compared to a randomized 5-fold 50/50 cross-validated (CV) version of James-Stein, constant MTA, and minimax MTA.  For the cross-validated versions, we randomly subsampled $N_t/2$ samples and chose the value of $\gamma$ for constant/minimax MTA or $\lambda$ for James-Stein that resulted in the lowest average left-out risk compared to the sample mean estimated with \emph{all} $N_t$ samples.  In the optimal versions of constant/minimax MTA $\gamma$ was set to 1, as this was the case during derivation. Note that the James-Stein formulation with a cross-validated regularization parameter $\lambda$ is simply a convex regularization towards the average of the sample means:
$$\lambda \bar{y}_t + (1-\lambda)\bar{\bar{y}}.$$

We used the following parameters for CV: $\gamma \in \{2^{-5}, 2^{-4},\ldots, 2^5\}$ for the MTA estimators and a comparable set of $\lambda$ spanning $(0,1)$ by the transformation $\lambda = \frac{\gamma}{\gamma + 1}.$  Even when cross-validating, an advantage of using the proposed constant MTA or minimax MTA is that these estimators provide a data-adaptive scale for $\gamma$, where $\gamma = 1$ sets the regularization parameter to be $\frac{a^*}{T}$ or $\frac{1}{T(b_u-b_l)^2}$, respectively.

\newpage
\begin{figure}[h!]
\begin{center}
\begin{tabular}{c}
$\bf{Gaussian, T=2}$ \\
\includegraphics[width=0.87\textwidth]{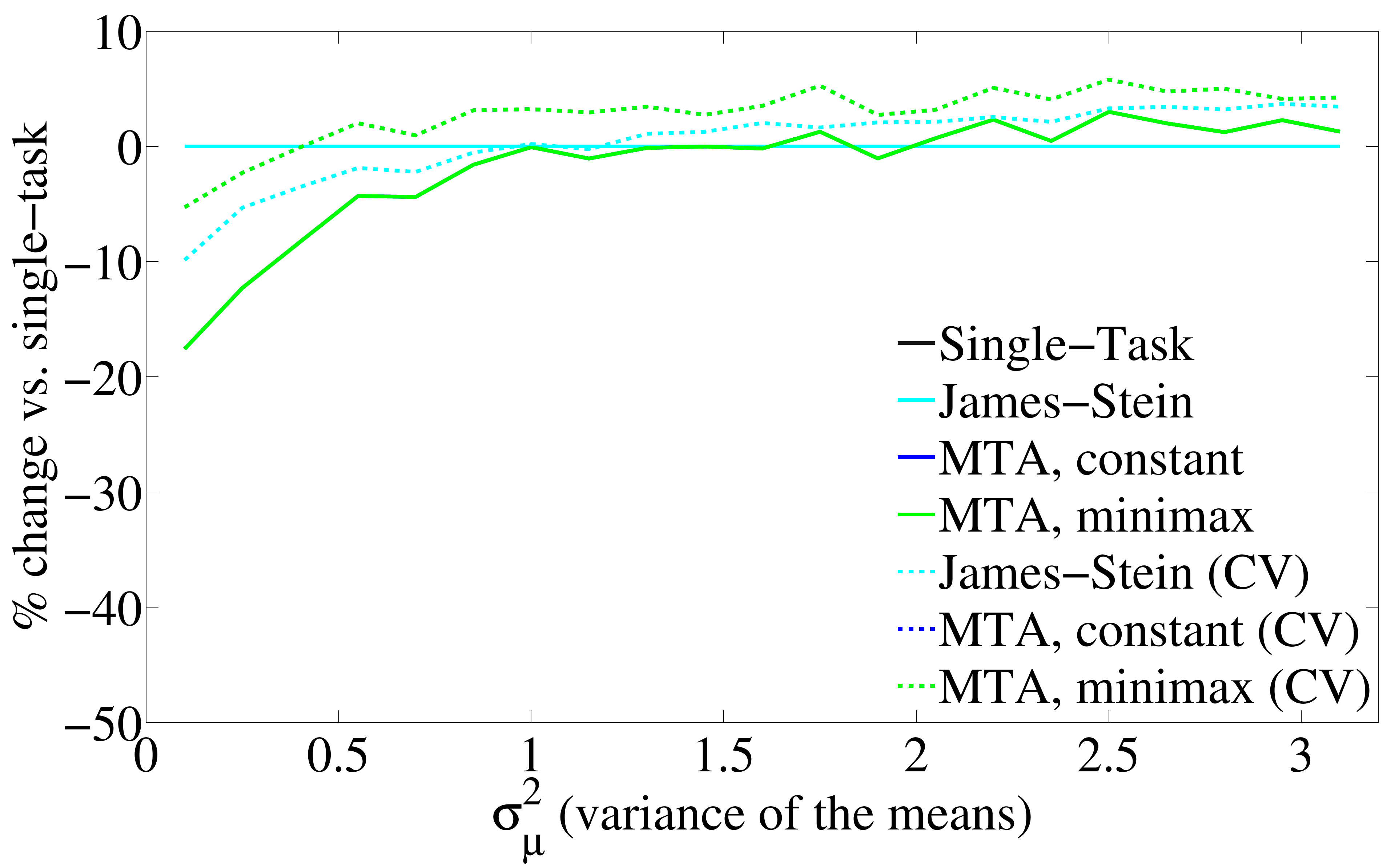}\\
$\bf{T=5}$\\
\includegraphics[width=0.87\textwidth]{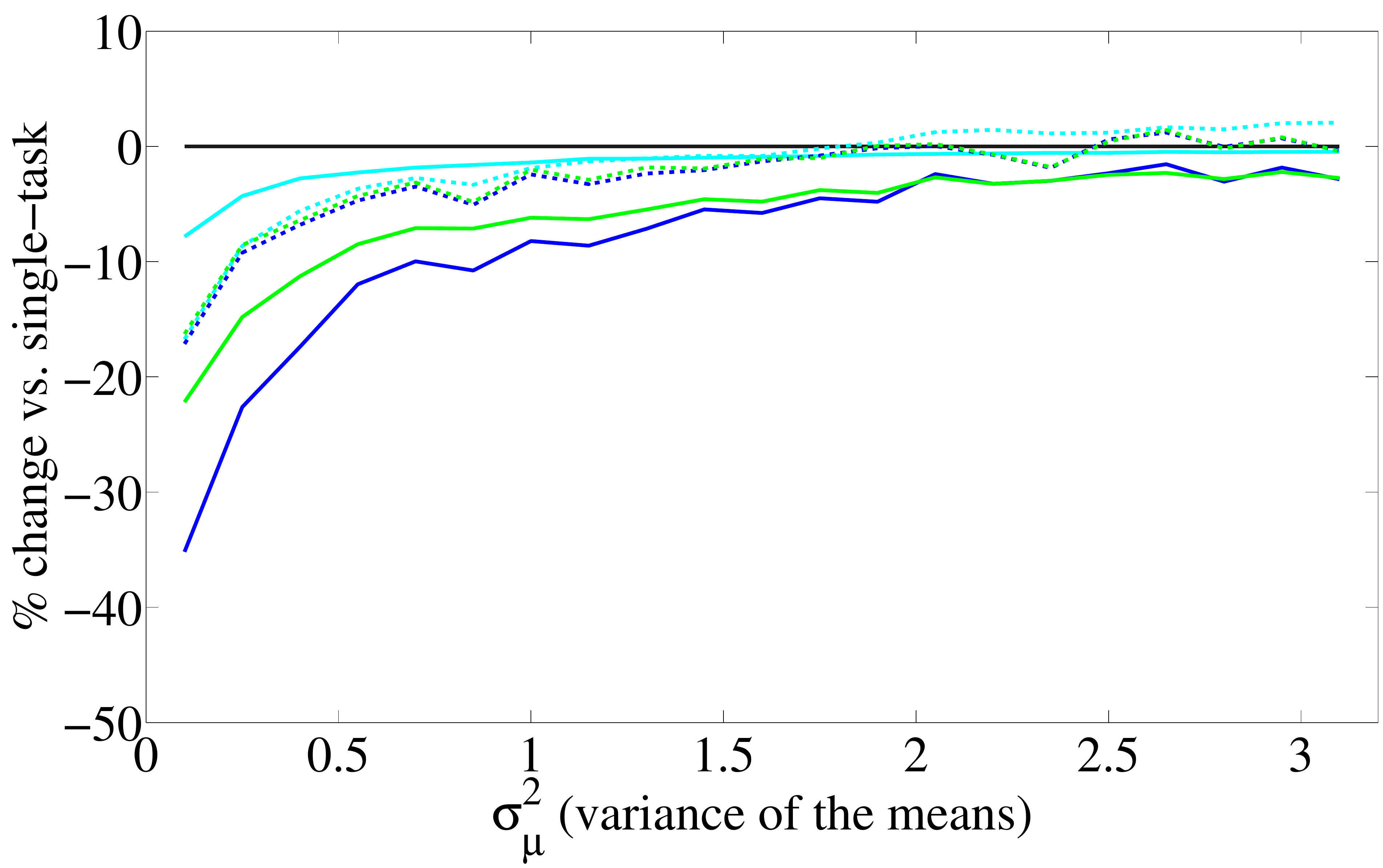}
\end{tabular}
\caption{\footnotesize Gaussian experiment results for $T = \{2,5\}$.  The y-axis is average (over 10000 random draws) percent change in risk vs. single-task, such that $-50$\% means the estimator has half the risk of single-task.  Note: for $T=2$ the James-Stein estimator reduces to single-task, and so the cyan and black lines overlap.  Similarly, for $T=2$, constant MTA and minimax MTA are identical, and so the blue and green lines overlap.}\label{fig:sim_gauss_1}
\end{center}
\end{figure}
\newpage
\begin{figure}[h!]
\begin{center}
\begin{tabular}{c}
$\bf{Gaussian, T=25}$ \\
\includegraphics[width=0.87\textwidth]{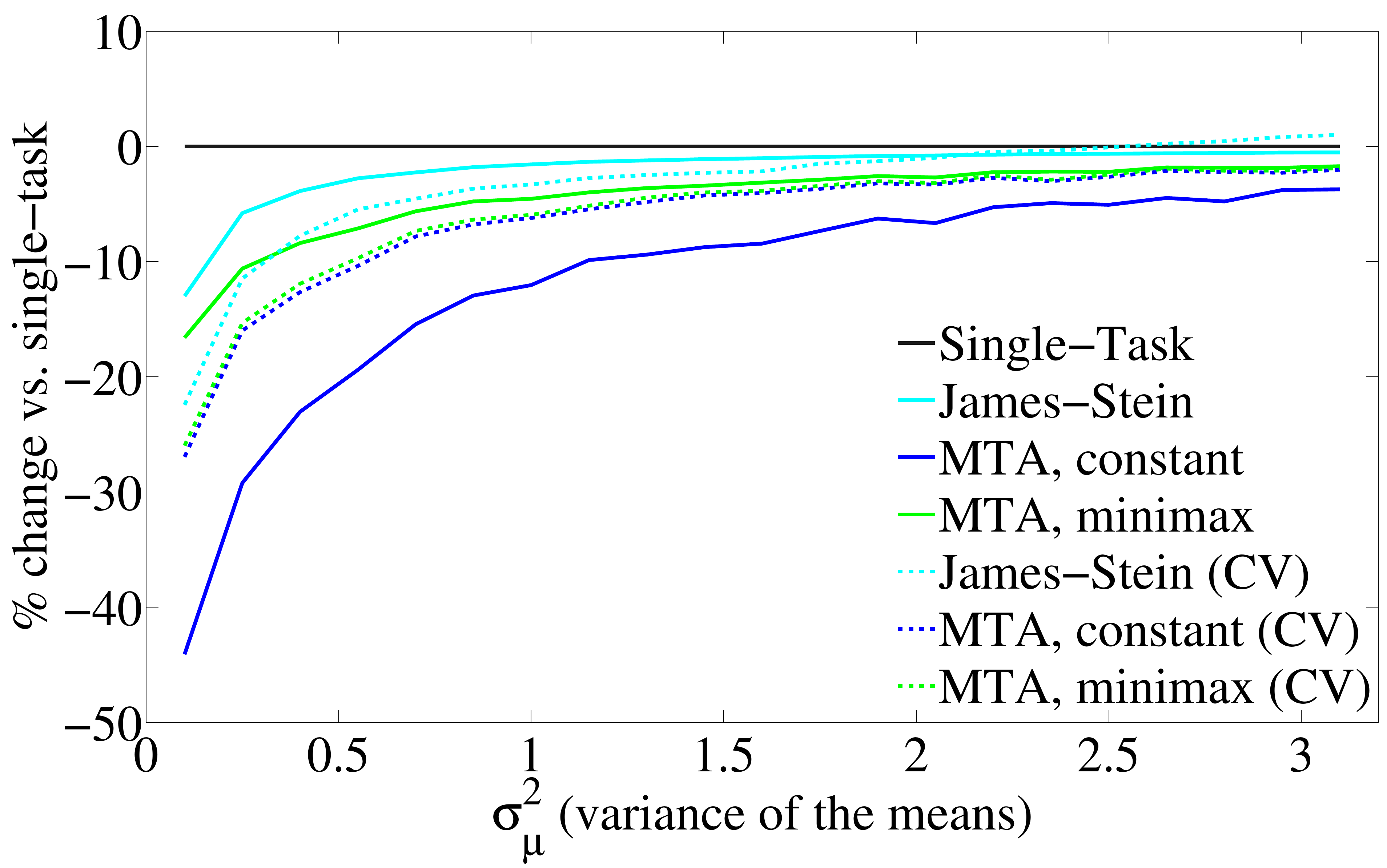}\\
$\bf{T=500}$\\
\includegraphics[width=0.87\textwidth]{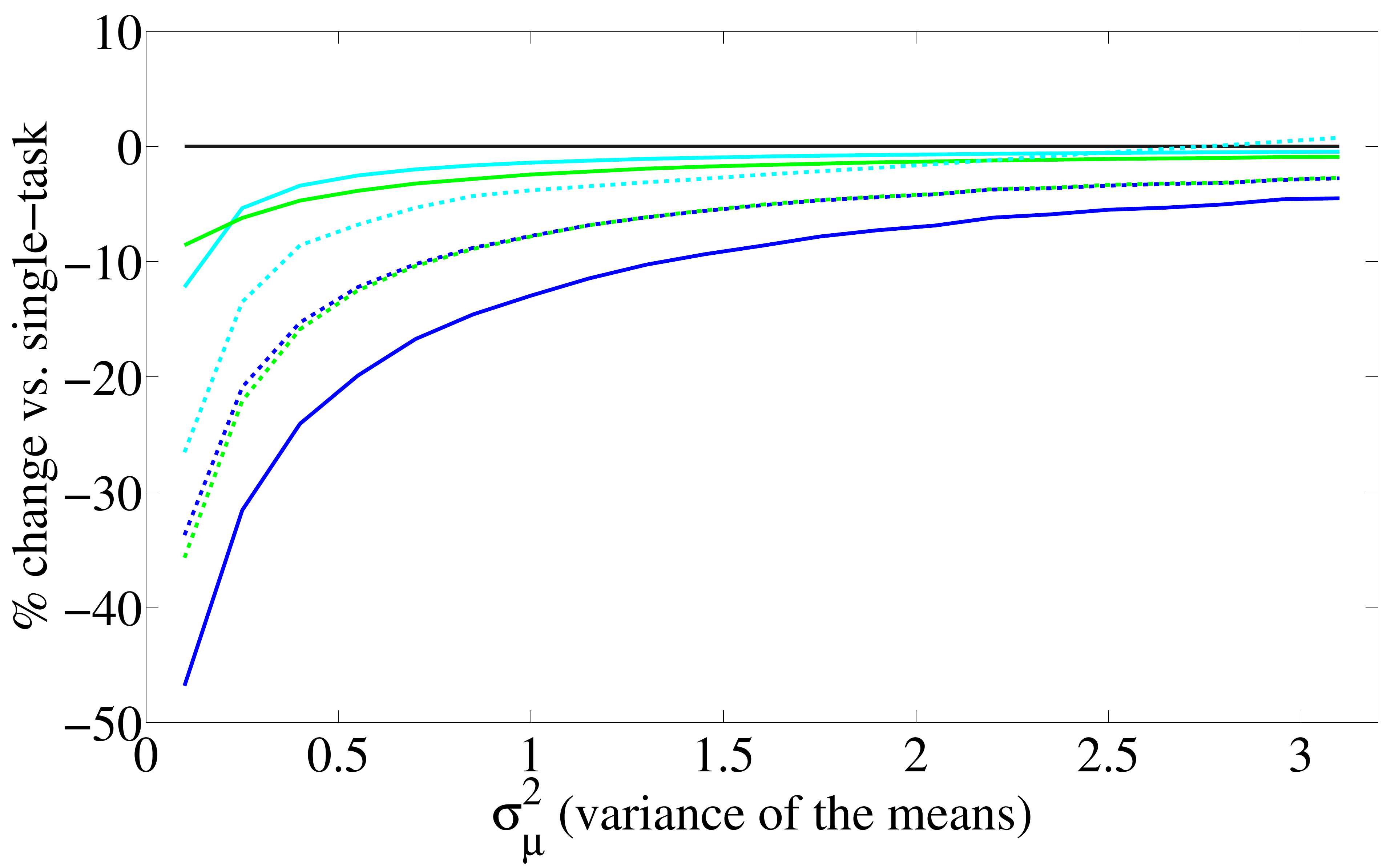}
\end{tabular}
\caption{\footnotesize  Gaussian experiment results for $T = \{25,500\}$.  The y-axis is average (over 10000 random draws) percent change in risk vs. single-task, such that $-50$\% means the estimator has half the risk of single-task.}\label{fig:sim_gauss_2}
\end{center}
\end{figure}

\newpage

\begin{figure}[h!]
\begin{center}
\begin{tabular}{c}
$\bf{Uniform, T=2}$ \\
\includegraphics[width=0.87\textwidth]{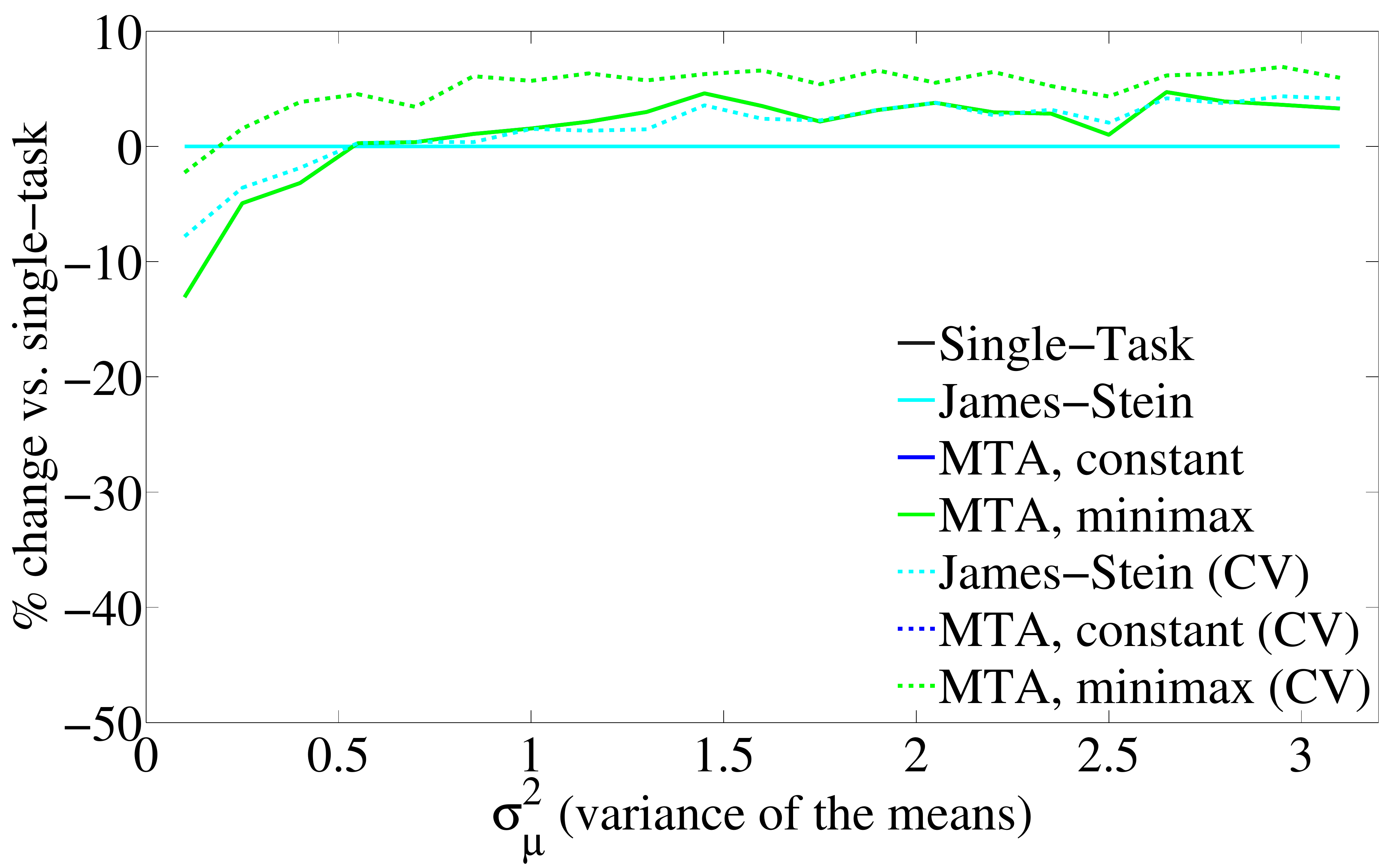}\\
$\bf{T=5}$\\
\includegraphics[width=0.87\textwidth]{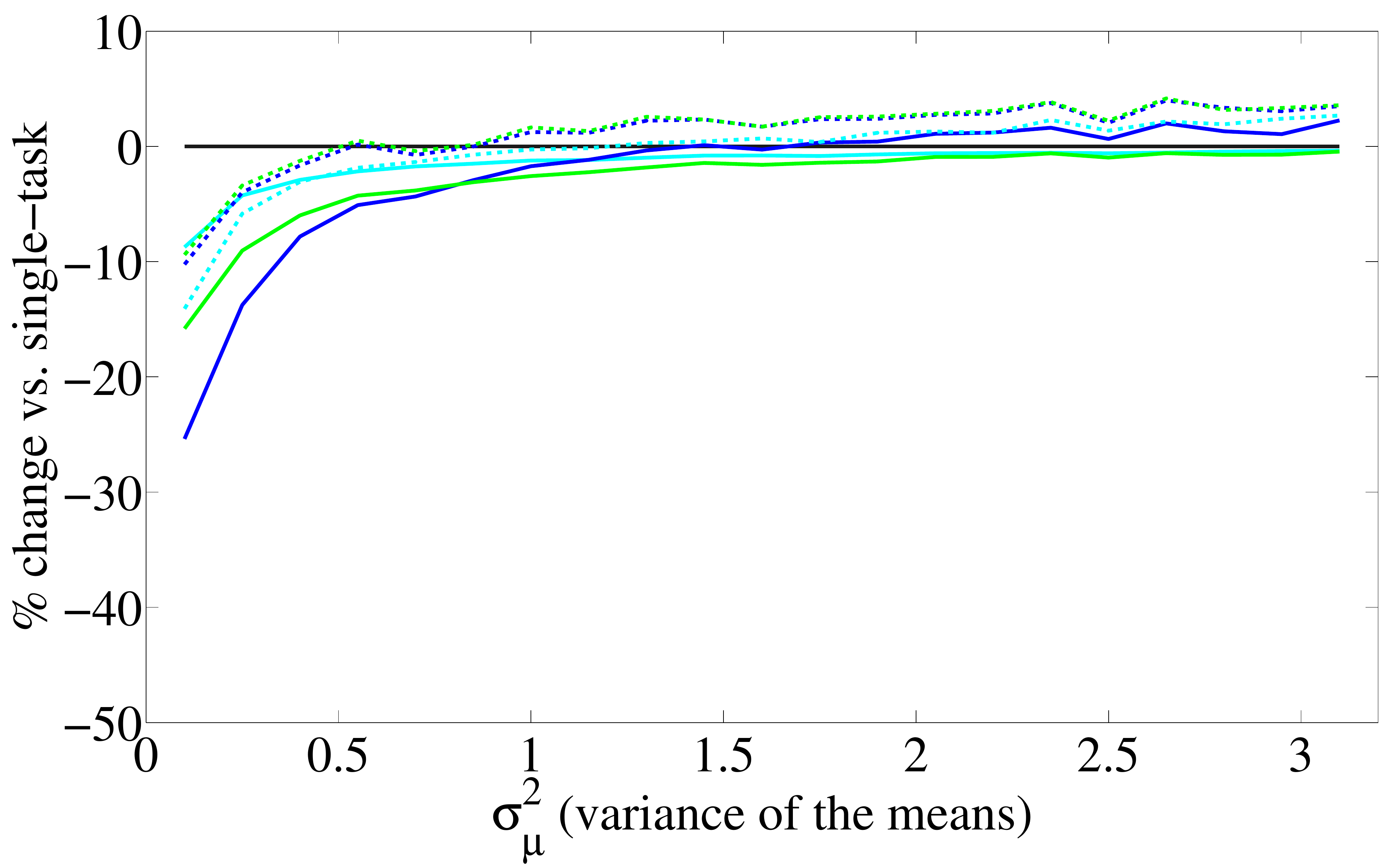}
\end{tabular}
\caption{\footnotesize Uniform experiment results for $T = \{2,5\}$.  The y-axis is average (over 10000 random draws) percent change in risk vs. single-task, such that $-50$\% means the estimator has half the risk of single-task.  Note: for $T=2$ the James-Stein estimator reduces to single-task, and so the cyan and black lines overlap.  Similarly, for $T=2$, constant MTA and minimax MTA are identical, and so the blue and green lines overlap.}\label{fig:sim_uniform_1}
\end{center}
\end{figure}

\newpage

\begin{figure}[h!]
\begin{center}
\begin{tabular}{c}
$\bf{Uniform, T=25}$ \\
\includegraphics[width=0.87\textwidth]{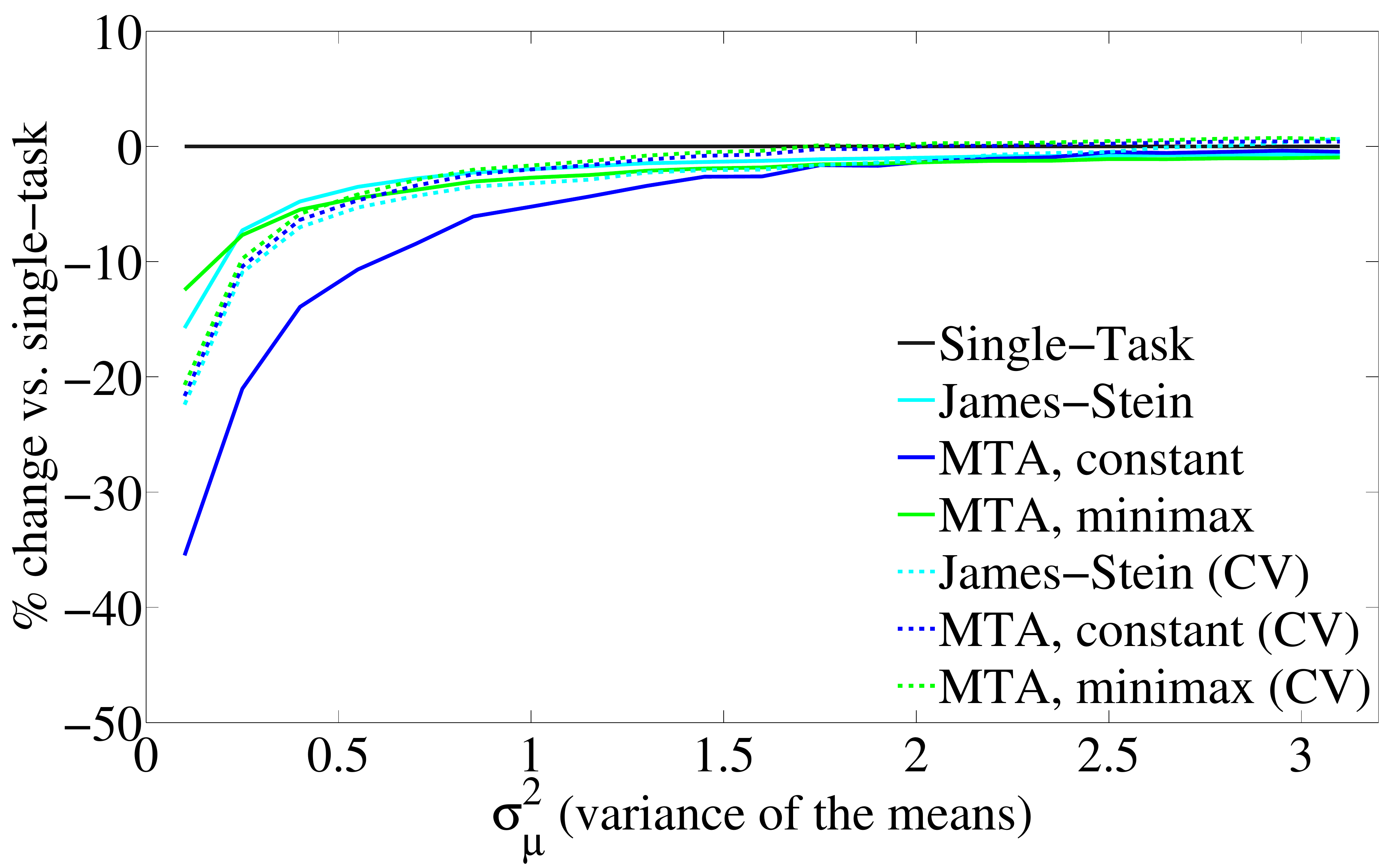}\\
$\bf{T=500}$\\
\includegraphics[width=0.87\textwidth]{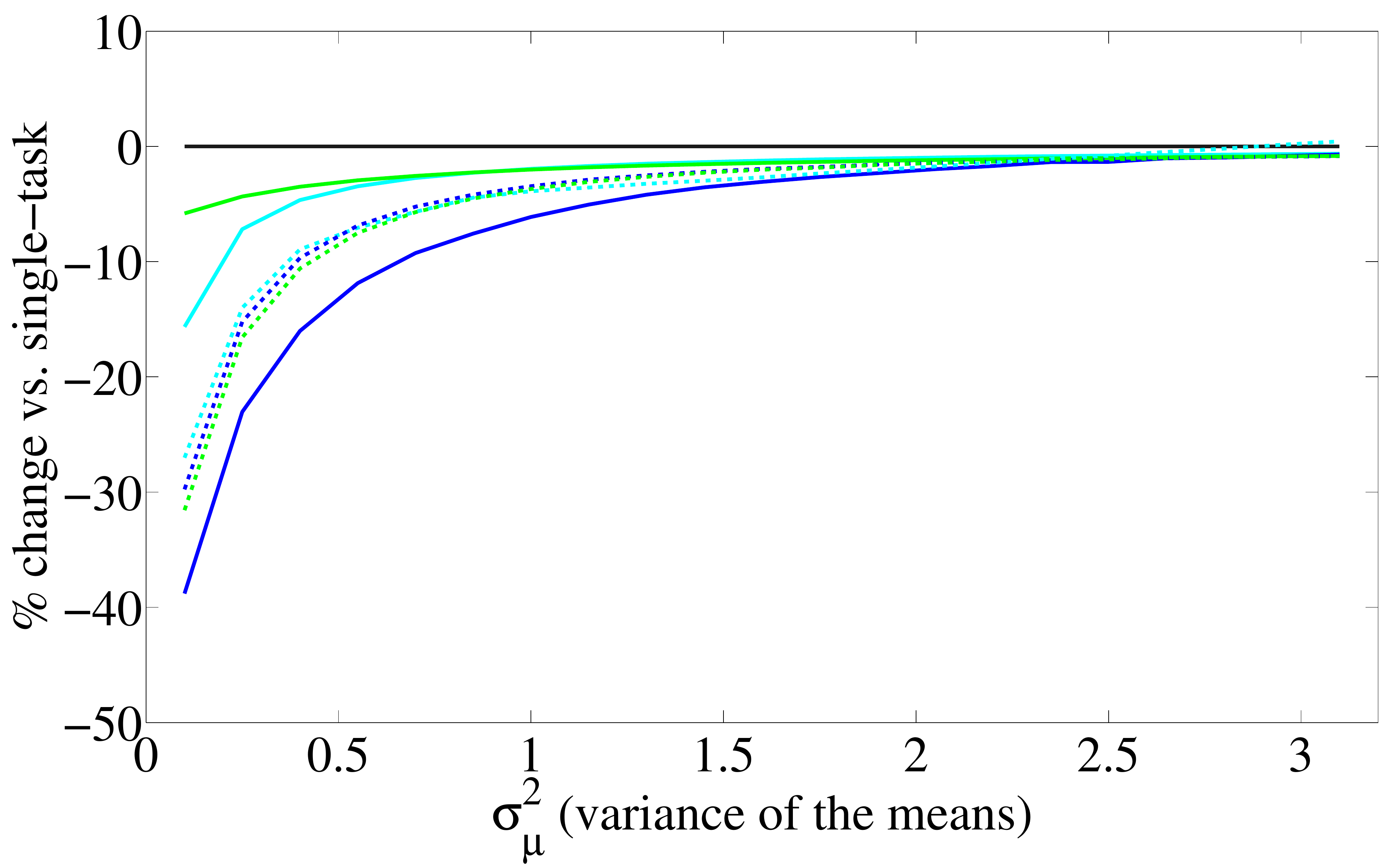}
\end{tabular}
\caption{\footnotesize  Uniform experiment results for for $T = \{25,500\}$.  The y-axis is average (over 10000 random draws) percent change in risk vs. single-task, such that $-50$\% means the estimator has half the risk of single-task.}\label{fig:sim_uniform_2}
\end{center}
\end{figure}

\newpage


Some observations from Figures \ref{fig:sim_gauss_1}-\ref{fig:sim_uniform_2}:
\begin{itemize}
\item Further to the right on the x-axis the means are more likely to be further apart, and multi-task approaches help less on average.
\item For $T=2$, the James-Stein estimator reduces to the single-task estimator.  The MTA estimators provide a gain while the means are close with high probability (that is, when $\sigma^2_\mu < 1$) but deteriorate quickly thereafter.
\item For $T=5$, constant MTA dominates in the Gaussian case, but in the uniform case does worse than single-task when the means are far apart.  Note that for all $T>2$ minimax MTA almost always outperforms James-Stein and always outperforms single-task, which is to be expected as it was designed conservatively.
\item For $T=25$ and $T=500$, we see the trend that all estimators benefit from an increase in the number of tasks.  The difference between $T=25$ performance and $T=500$ performance is minor, indicating that benefit from further tasks levels off early on.
\item For constant MTA, cross-validation is always worse than the estimated optimal regularization, while the opposite is true for minimax MTA.  This is to be expected, as minimax estimators are not designed to minimizes the average risk, which is what we report and the metric optimized during cross-validation.
\item Since both constant MTA and minimax MTA use a similarity matrix of all ones scaled by a constant (albeit it a different one for constant MTA and minimax MTA), cross-validating over a set of possible $\gamma$ should result in similar performance, and this can be seen in the Figures (i.e. the green and blue dotted lines are superimposed).
\end{itemize}

In summary, when the tasks are close to each other compared to their variances, constant MTA is the best estimator to use by a wide margin.  When the tasks are farther apart, minimax MTA  provides a win over both James-Stein and sample averages.

\subsection{Oracle Performance}
To illustrate the best achievable performance with MTA, Figure \ref{fig:oracleresults} shows the effect of using the true ``oracle'' means and variances for the calculation of optimal pairwise similarities.  This experiment separates separates how well the MTA formulation can do from the issue of estimating the optimal similarity matrix from the data.  We use the pairwise oracle matrix $A$\footnote{After experimentation, we found that this similarity matrix gave us the best oracle performance.  Note that an estimated optimal similarity $\hat{A}_{rs} = \frac{2}{(\bar{y}_r-\bar{y}_s)^2}$ almost always does worse than constant MTA and minimax MTA.}:
$$A^\text{orcl}_{rs} = \frac{2}{(\mu_r-\mu_s)^2},$$
which consistently bested oracle constant MTA and oracle minimax MTA.  The plot reproduces the results from the $T=5$ Gaussian simulation (excluding cross-validation results), and includes the performance of oracle pairwise MTA.  Oracle MTA is over 30\% better than constant MTA, indicating that practical estimates of the similarity, while improving on single-task estimation, are highly suboptimal compared to possible MTA performance.

\begin{figure}[th!]
\begin{center}
\includegraphics[width=1.0\textwidth]{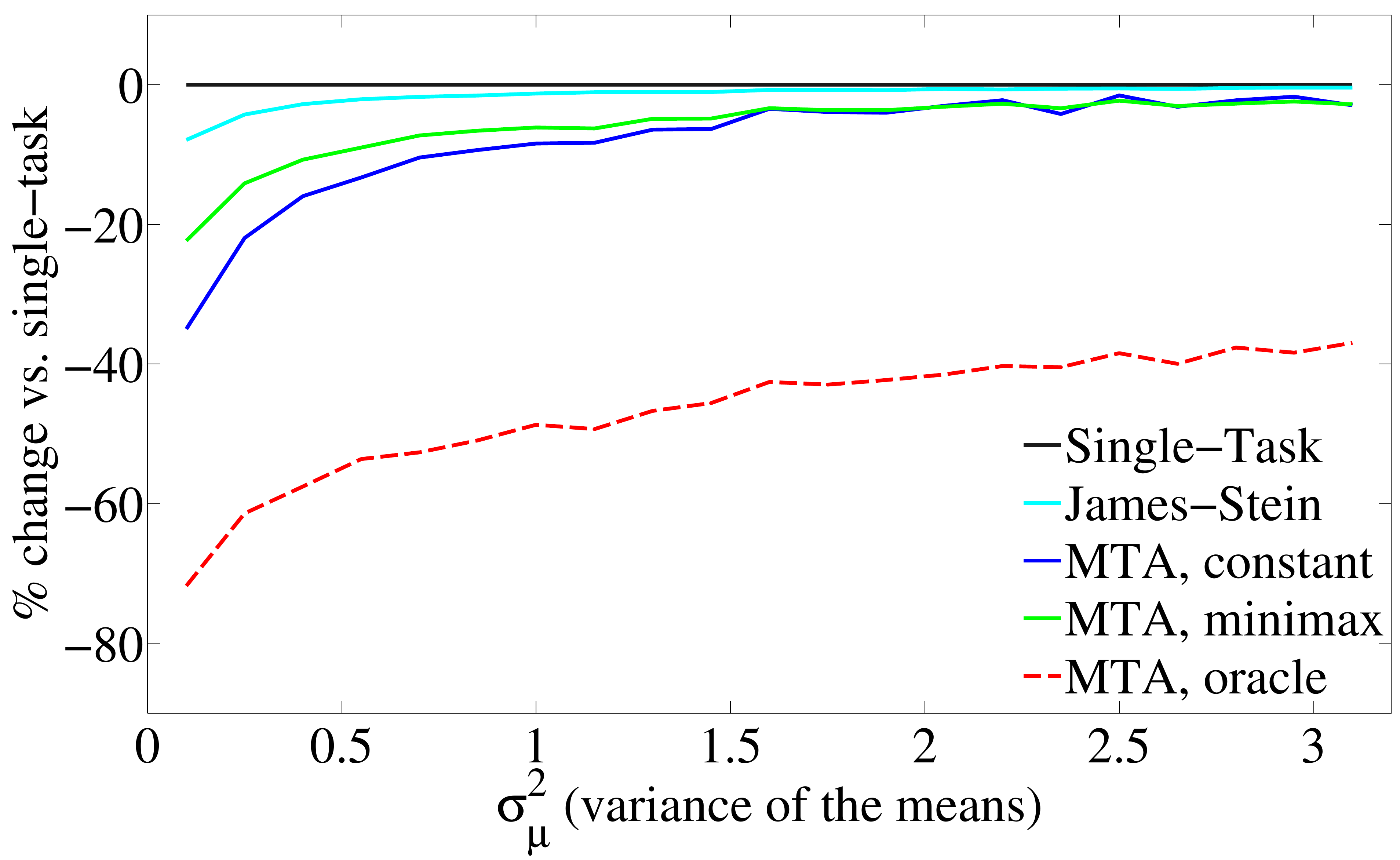}
\caption{\footnotesize Average (over 10000 random draws) percent change in risk vs. single-task with $T=5$ for the Gaussian simulation.  Oracle MTA uses the true means and variance to specify the weight matrix $W$.}\label{fig:oracleresults}
\end{center}
\end{figure}

\newpage
\section{Applications}\label{sec:applications}
We present three applications with real data. The first two applications parallel the simulations: estimating expected values of final grades and sales of related products. The third application uses MTA for multi-task kernel density estimation, highlighting the applicability of MTA to any algorithm that uses sample averages.

\subsection{MTA for Grade Estimation}\label{sec:grades}
The goal of this application is to predict the final class grades $\{\mu_t\}_{t=1}^T$ for all $T$ students, given only each student's $N$ homework grades $\{y_{ti}\}_{i=1}^N$ (in this application $N_t = N$ for all $t$ as every student had been assigned the same number of homeworks).  The final class grades include homeworks, projects, labs, quizzes, midterms, and the final exam, but only the homework grades are used to predict the final grade.  The 16 anonymized datasets were provided by instructors at the University of Washington Department of Electrical Engineering.
Some experimental details:
\begin{itemize}
\item Each of the 16 datasets (classes) constitutes a single experiment, and the students in that class are treated as the tasks.
\item All the grades have been normalized to be between $0$ and $100$.
\item Homeworks that were never handed in were assigned $0$ points.
\item The number of students across the 16 classes is between $T=16$ and $T=149$.
\item Cross-validation parameters were chosen by training on $N/2$ of the homework grades and validating on the sample mean of all $N$ given grades.  Again, we used randomized 5-fold 50/50 cross-validation.
\item For each class, a single pooled variance estimate was used for all tasks (that is, students).  In other words $\sigma_t^2 = \sigma^2,$ for all $t$.
\item The estimator marked ``one-task' is just a constant pooled mean for all tasks:
$$\hat{y}^\text{pl}_t = \frac{1}{TN}\sum_{t=1}^T \sum_{i=1}^N y_{ti}.$$
\item For each class of students, the error measurement for estimator $\hat{y}$ is the risk (average of squared errors) across all $T$ students:
$$\frac{1}{T}\sum_{t=1}^T (\mu_t-\hat{y}_t)^2.$$
\end{itemize}

This error metric was computed for each class (dataset) separately, and the percent change in average risk vs. single-task are reported in Table \ref{tab:graderesults}.

\begin{table}[h!]
\caption{Percent change in risk vs. single-task for the grade estimation application (lower is better).  `JS' denotes James-Stein, `MTA $\hat{a}^*$' and `MTA mm' denote constant MTA and minimax MTA, respectively, `CV' denotes cross-validation, and `STD' denotes standard deviation.  Lower is better.}
\label{tab:graderesults}
\begin{center}
\begin{tabular}{l|rrrrrrr}
 Class & One-Task& JS & JS  & MTA & MTA & MTA  & MTA \\
 Size &         &    &   & $\hat{a}^*$    & $\hat{a}^*$ &   mm  & mm \\
 &         &    & CV  &     & CV &       & CV \\
 \hline
 16 & $26.3$ & $0.7$ & $0.2$ & $0.6$ & $0.1$ & $\bf{0}$ & $0.1$ \\
 29 & $-6.8$ & $-11.0$ & $\bf{-13.4}$ & $-10.8$ & $-5.9$ & $-1.7$ & $-6.4$ \\
 36 & $\bf{-28.3}$ & $-17.4$ & $-12.4$ & $-16.0$ & $-9.1$  & $-2.8$ & $-10.0$ \\
 39 & $42.0$ & $\mathbf{-5.8}$ & $-2.3$ & $-5.6$ & $-0.9$ & $-0.9$ & $-0.9$ \\
 44 & $3.0$ & $\bf{-47.6}$ & $-47.3$ & $-42.7$ & $-42.7$ & $-7.0$ & $-41.1$ \\
 47 & $\bf{-12.8}$ & $-8.0$ & $-5.2$ & $-7.1$ & $-4.1$ & $-0.7$ & $-2.6$ \\
 48 & $\mathbf{-21.0}$ & $-20.5$ & $-13.7$ & $-18.5$ & $-5.8$ & $-2.5$ & $-4.9$ \\
 50 & $63.5$ & $63.5$ & $16.3$ & $9.3$ & $9.3$ & $\bf{-4.4}$ & $14.8$ \\
 50 & $3.7$ & $\bf{-33.6}$ & $-19.1$ & $-29.7$ & $-10.1$ & $-3.2$ & $-11.6$ \\
 57 & $23.3$ & $-3.8$ & $\bf{-4.1}$ & $-3.6$ & $-2.1$ & $-0.4$ & $-1.4$ \\
 58 & $-0.2$ & $\mathbf{-16.3}$ & $-5.9$ & $-15.6$ & $-4.4$ & $-2.8$ & $-5.4$ \\
 68 & $-16.9$ & $\mathbf{-45.5}$ & $-38.5$ & $-39.0$ & $-27.7$ & $-6.1$ & $-31.1$ \\
 69 & $-14.7$ & $-41.0$ & $\bf{-42.4}$ & $-39.8$ & $-39.8$ & $-4.5$ & $-35.7$ \\
 72 & $34.6$ & $\bf{-32.9}$ & $-29.4$ & $-29.0$ & $-18.3$ & $-4.0$ & $-14.1$ \\
 110 & $5.7$ & $\mathbf{-14.8}$ & $-11.5$ & $-13.4$ & $-7.7$ & $-1.2$ & $-8.6$ \\
 149 & $\mathbf{-16.6}$ & $-11.7$ & $-11.8$  & $-10.1$  & $-5.9$  & $-0.8$  & $-5.8$  \\
 \hline
 & $5.3$ & $-15.4$ & $-15.0$ & $\bf{-16.9}$ & $-10.9$ & $-2.7$ & $-10.3$ \\
 & \emph{$25.9$} & \emph{$25.9$} & \emph{$16.9$} & \emph{$15.2$} & \emph{$14.3$} & \emph{$2.1$} & \emph{$14.4$}
\end{tabular}
\end{center}
\end{table}


Some observations:
\begin{itemize}
\item Constant MTA (without CV) has the lowest percent change averaged across all classes.
\item The James-Stein estimator has the best percent change from single-task on 7 of the 16 classes.
\item The cross-validated versions of estimators do worse than their estimated optimal counterparts.
\item For classes 5 and 8, only minimax MTA with estimated similarity does better than the single-task estimate $\bar{y}$.  This is rare, but not impossible; a few of the students happened to have a low average homework grade and an even lower final grade, which resulted in an outsized contribution to the risk.  The expectation is over random samples used to form the estimate, and here the particular realizations of the all the grades were poor.  Also, the model that homework grades are drawn iid from a distribution with the final grade as the mean is only a model\footnote{``All models are wrong, but some are useful." --George E. P. Box}.
\item Minimax MTA was never worse than single-task, robustly providing relatively small gains, as designed.
\item The James-Stein estimator is also a minimax estimator, but its performance is as highly variable as the one-task estimator.  This is because of the positive-part aspect of the JS estimator -- when the positive-part boundary is triggered, JS reverts to the one-task estimator.
\item Surprisingly, the one-task estimator, which pools all students' scores to estimate a single grade, does better than single-task for half of the classes, and is the best performer for 4 out of 16.  When the one-task estimator outperforms single-task, we hypothesize that individual homework grades are poor estimates of final grades.  Further, when the one-task estimator is the best estimator, we hypothesize that the assumed model is wrong.  That is, the homework grades are \emph{not} iid draws from the ``true" distribution of grades, and, in fact, in those case the homework grades of any individual student provide little information about the final grade.  This may occur if the instructor chose to put a small weight on the homework grades, or if the tests and labs required a different skill set from the homework.
\end{itemize}

\subsection{Application: Estimating Product Sales}\label{sec:product_sales}
We now consider two multi-task problems using sales data from Artifact Puzzles. For both problems, we model the given samples as being drawn iid from each task.

The first problem estimates the impact of a particular \emph{puzzle} on repeat business: ``Estimate how much any customer will spend on a given order, if on their last order they purchased the $t$th puzzle, for each of $T = 77$ puzzles." The samples were the amounts different customers had spent on orders after buying each of the $t$ puzzles during a given time period, and ranged from $0$ for customers that had not re-ordered in the specified time period, to $480$. The number of samples for each puzzle ranged from $N_t = 8$ to $N_t = 348$.

The second problem estimates the monetary value of a particular \emph{customer}: ``Estimate how much the $t$th customer will spend on a given order, for each of $T = 477$ customers." The samples were the order amounts for each of the $T$ customers. Order amounts varied from $15$ to $480$. The number of samples for each customer ranged from $N_t = 2$ to $N_t = 17$.

We have only samples, no ground truth, so to compare the estimators we treat the single-task means  computed from all of the samples as the ground truth, and compare to estimates computed from a uniformly randomly chosen $50\%$ of the samples. Results in Table \ref{tab:results} are averaged over 1000 random draws of the $50\%$ used for estimation.  Again, we used 5-fold cross-validation with the same parameter choices as in the simulations section.

We bolded those entries that were the best or not statistically significantly better than the best according to two one-sided Wilcoxon rank statistical  significance tests.

Some observations:
\begin{itemize}
\item One-task is a very poor estimator for all of the experiments in this section.
\item Using cross-validation with the two minimax estimators (James-Stein and minimax MTA) statistically significantly outperformed their estimated optimal counterparts.  This is consistent with the simulation results.
\item Constant MTA provided comparable performance to the cross-validated estimators.  It was the best or not statistically significantally better than the best of all the other non-CV estimators.
\end{itemize}

\begin{table}[h!]
\caption{Percent change in average risk for puzzle and customer data (first two columns, lower is better), and mean reciprocal rank for terrorist data (last column,  higher is better).}
\label{tab:results}
\begin{center}
\begin{tabular}{l||rr||r}
\hline
Estimator         & Puzzles        & Customers          &  Suicide Bombings \\
                  & $T = 77$       & $T=477$            &   $T = 7$         \\
\hline
Single-Task &  0     &    0       &  0.15   \\
One-Task &  181.7     &    109.2       &  0.13   \\
James-Stein       &  -6.9        &  -14.0         &  0.15        \\
James-Stein CV  &  -21.2       &  -31.0         &  0.19       \\
Constant MTA      &  -17.5       &  -\bf{32.3}    &  0.19       \\
Constant MTA CV &  -\bf{21.7}  &  -30.9         &  0.19     \\
Minimax MTA       &  -8.4        &  -3.0          &  0.19   \\
Minimax MTA CV  &  -19.8       &  -25.0         &  0.19 \\
Expert MTA        &   -             &    -            &  0.19  \\
Expert MTA CV   &   -             &    -            &  0.19  \\
\end{tabular}
\end{center}
\end{table}

\subsection{Density Estimation for Terrorism Risk Assessment} \label{sec:KDE}
In this section we present \emph{multi-task kernel density estimation} (MT-KDE), a variant of MTA.

MTA can be used whenever averages are taken.  Recall that for standard single-task kernel density estimation (KDE) \citep{Silverman}, a set of random samples
$x_i \in \R^d, i \in \{1,\ldots,N\}$ are assumed to be iid from an unknown distribution $p_X$,
and the problem is to estimate the density for a query sample, $z \in \R^d$.
Given a kernel function $K(x_i,x_j)$, the un-normalized single-task
KDE estimate is $$\hat{p}(z) = \frac{1}{N}\sum_{i=1}^N K(x_i,z),$$
which is just a sample average.

When multiple kernel densities $\{\hat{p}_t(z)\}_{t=1}^T$ are estimated for the same domain, we replace the multiple sample averages with MTA estimates, which we refer to as multi-task kernel density estimation (MT-KDE).

We compared KDE and MT-KDE on a problem of estimating the probability of terrorist events in
Jerusalem using the Naval Research Laboratory's Adversarial Modeling and Exploitation Database (NRL AMX-DB).
The NRL AMX-DB combined multiple open primary sources\footnote{Primary sources included
 the NRL Israel Suicide Terrorism Database (ISD) cross referenced with
 open sources (including the Israel Ministry of Foreign Affairs, BBC, CPOST,
 Daily Telegraph, Associated Press, Ha'aretz Daily, Jerusalem Post, Israel National News), as well as
 the University of New Haven Institute for the Study of Violent Groups, the University of
 Maryland Global Terrorism Database, and the National Counter Terrorism Center
 Worldwide Incident Tracking System.}
to create a rich representation of the geospatial features of urban Jerusalem and the surrounding region, and accurately
geocoded locations of terrorist attacks.  Density estimation models are used to analyze the behavior
of such violent agents, and to allocate security and medical resources.
In related work, \citep{Brown:04} also used a Gaussian kernel density estimate to assess risk from past terrorism events.

The goal in this application is to estimate a risk density for 40,000 geographical locations (samples) in a 20km $\times$ 20km area of interest in Jerusalem.  Each geographical location is represented by a $d=76$-dimensional feature vector.  Each of the 76 features is the distance in kilometers to the nearest instance of some geographic location of interest, such as the nearest market or bus stop. Locations of past events are known for 17 suicide bombings.  All the events are attributed to one of seven terrorist groups. The density estimates for these seven groups are expected to be related, and are treated as $T=7$ tasks.

The kernel $K$ was taken to be a Gaussian kernel with identity covariance; the bandwidth was set to 1.  In addition to constant $A$ and minimax $A$, we also obtained a side-information $A$ from terrorism expert Mohammed M. Hafez of the Naval Postgraduate School; he assessed the similarity between the seven groups during the Second Intifada (the time period of the data), providing similarities between $0$ and $1$.  The similarities are shown in Table \ref{tab:FactionSimilarities}.

\begin{table}
\caption{Hafez's Similarity Matrix $A$}
\label{tab:FactionSimilarities}
\begin{tabular}{lccccccc}
\hline\noalign{\smallskip}
          & AAMB & Hamas & PIJ	& PFLP 	& Fatah & Force17 & Unknown \\
\noalign{\smallskip}\hline\noalign{\smallskip}
AAMB      &  0& .2& .2& .6& .8& .8& .6 \\
Hamas     & .2&  0& .8& .2& .2& .2& .4 \\
PIJ       & .2& .8&  0& .2& .2& .2& .4 \\
PFLP	  & .6& .2& .2&  0& .6& .6& .5 \\
Fatah	  & .8& .2& .2& .6&  0&  1& .6 \\
Force17   & .8& .2& .2& .6&  1&  0& .6 \\
Unknown   & .6&	.4&	.4&	.5&	.6&	.6&	0  \\
\noalign{\smallskip}\hline
\end{tabular}
\end{table}

The KDE estimates were computed separately for each grid point and each task.  The MT-KDE estimates were obtained for one grid point at a time, but for all of the tasks simultaneously.  In other words, the regularization was performed only across tasks, and not across grid points.

Leave-one-out cross validation was used to assess KDE and MT-KDE for this problem, as follows.
After computing the KDE and MT-KDE density estimates using all but one of the training examples $\{x_{ti}\}$ for each task, we sort the resulting 40,000 estimated probabilities for each of the seven tasks, and extract the rank of the left-out known event.  The mean reciprocal rank (MRR) metric is reported in Table \ref{tab:results}.  Ideally, the MRR of the left-out events would be as close to $1$ as possible, and indicating that the location of the left-out event is at high-risk.  The results show that the MRR for MT-KDE are lower or not worse than those for KDE for both problems; there are, however, too few samples to verify statistical significance of these results.  Also, note that the solution of pooling all of the training data into one big task gives inferior performance, and we suspect that this is because each terrorist group has its own target preferences.

\section{Summary}
Though perhaps unintuitive, we showed that both in theory and in practice estimating multiple \emph{unrelated} means in a joint MTL fashion can improve the overall risk, even more so than the classic, battle-tested James-Stein estimator.  Averaging is common, and MTA has potentially broad applicability as a subcomponent to many algorithms, such as k-means clustering, kernel density estimation, or non-local means denoising.

\section*{Acknowledgments}
We thank Peter Sadowski, and Carol Chang, Brian Sandberg, and Ruth Willis of the Naval Research Lab for the terrorist event dataset and helpful discussions. We thank Mohammed M. Hafez of the Naval Postgraduate School for the matrix of similarities of the terrorist groups. 
This work was funded by a United States PECASE Award and by the United States Office of Naval Research.

\appendix

\section*{Appendix A: MTA Closed-form Solution}
When all $A_{rs}$ are non-negative, the differentiable MTA objective is convex, and a admits closed-form solution.  First, we rewrite the objective in (\ref{eqn:mta}) in matrix notation:
\begin{align*}
&\frac{1}{T} \sum_{t=1}^T \frac{1}{\sigma_t^2} \sum_{i=1}^{N_t} (y_{ti} - \hat{y}_t)^2 + \frac{\gamma}{T^2} \sum_{r=1}^T \sum_{s=1}^T A_{rs}(\hat{y}_r-\hat{y}_s)^2\\
&= \frac{1}{T}\sum_{t=1}^T \frac{1}{\sigma_t^2} \sum_{i=1}^{N_t} (y_{ti} - \hat{y}_t)^2 + \frac{\gamma}{T^2} \hat{y}^T L \hat{y}\\
&= \frac{1}{T} \sum_{t=1}^T \frac{1}{\sigma_t^2} \sum_{i=1}^{N_t} \left( y_{ti}^2 + \hat{y}_t^2 - 2y_{ti}\hat{y}_t\right) + \frac{\gamma}{T^2} \hat{y}^T L \hat{y}\\
&= \frac{1}{T} \sum_{t=1}^T \left( \frac{1}{\sigma_t^2} \sum_{i=1}^{N_t} y_{ti}^2 + \frac{1}{\sigma_t^2}\hat{y}_t^2 \sum_{i=1}^{N_t}1  - 2 \frac{1}{\sigma_t^2}\hat{y}_t \sum_{i=1}^{N_t} y_{ti}\right) + \frac{\gamma}{T^2} \hat{y}^T L \hat{y}\\
&= \frac{1}{T} \sum_{t=1}^T \left( \frac{1}{\sigma_t^2} \sum_{i=1}^{N_t} y_{ti}^2 + \frac{ N_t}{\sigma_t^2}\hat{y}_t^2 - 2 \frac{N_t}{\sigma_t^2}\hat{y}_t  \bar{y}_t \right) + \frac{\gamma}{T^2} \hat{y}^T L  \hat{y} \\
&= \frac{1}{T} \left( \sum_{t=1}^T \frac{1}{\sigma_t^2} \sum_{i=1}^{N_t} y_{ti}^2 + \hat{y}^T \Sigma^{-1}\hat{y} - 2 \hat{y}^T \Sigma^{-1}\bar{y}\right) + \frac{\gamma}{T^2} \hat{y}^T L \hat{y},
\end{align*}
where $L = D - (A+A^T)/2$  is the graph Laplacian matrix $(A+A^T)/2$, $\Sigma$ is a diagonal matrix with $\Sigma_{tt} = \frac{\sigma_t^2}{N_t}$, and $\hat{y}$ and $\bar{y}$ are column vectors with $t$th entries $\hat{y}_t$ and $\bar{y}_t$, respectively.

Note that the $(t,t)$th entry of the matrix $\Sigma$ is the variance of $\bar{y}$.

Note further that the Laplacian is of the symmetrized $(A+A^T)/2$ and not of $A$.  For simplicity of notation, we assume from now on that $A$ is symmetric.  If, in practice, an asymmetric $A$ is provided, it can simply be symmetrized.

To find the closed-form solution, we now take the partial derivative of the above objective w.r.t. $\hat{y}$ and equate to zero, obtaining
\begin{align}\label{eq:mta_deriv}
0 &= \frac{1}{T} \left(2 \Sigma^{-1} y^* - 2\Sigma^{-1}\bar{y} \right) + 2\frac{\gamma}{T^2} Ly^* \\
  &=  y^*  - \bar{y} + \frac{\gamma}{T} \Sigma L y^* \nonumber\\
\Leftrightarrow \bar{y} &= \left(I + \frac{\gamma}{T} \Sigma L \right)y^* \nonumber,
\end{align}
which yields the following optimal closed-form solution for $y^*$:
\begin{align}\label{eq:mta_graph}
y^* &= \left(I + \frac{\gamma}{T} \Sigma L\right)^{-1}\bar{y},
\end{align}
as long as the inverse exists, which we will prove in Appendix B.

\section*{Appendix B: Proof of Lemma 1}
\noindent \textbf{Assumptions:} $\gamma \geq 0$, $0 \leq A_{rs} < \infty $ for all $r,s$ and $0 < \frac{\sigma_t^2}{N_t} < \infty$ for all $t$.\\

\noindent \textbf{Lemma 1} \emph{The MTA solution matrix $W = \left(I + \frac{\gamma}{T} \Sigma L\right)^{-1}$}.\\

\begin{proof}
Let $B = W^{-1} = I + \frac{\gamma}{T} \Sigma L$.  The $(t,s)$th entry of $B$ is
\begin{align*}
B_{ts} &=
\begin{cases}
 1 + \frac{\gamma\sigma_t^2}{T N_t}\sum_{s\neq t} A_{ts}  &\text{if }t=s\\
 -\frac{\gamma\sigma_t^2}{T N_t} A_{ts} &\text{if }t\neq s,
\end{cases}
\end{align*}
The Gershgorin disk \citep{Horn} $\mathcal{D}(B_{tt},R_t)$ is the closed disk in $\mathbb{C}$ with center $B_{tt}$ and radius $$R_t = \sum_{s\neq t} |B_{ts}| = \frac{\gamma\sigma_t^2}{T N_t} \sum_{s\neq t} A_{ts}  = B_{tt} - 1.$$
One knows that $B_{tt} \geq 1$ for non-negative $A$ and when $\frac{\gamma\sigma_t^2}{T N_t} \geq 0$, as assumed in the lemma statement.  Also, it is clear that $B_{tt} > R_t$ for all $t$.  Therefore, every Gershgorin disk is contained within the positive half-plane of $\mathbb{C}$, and, by the Gershgorin Circle Theorem \citep{Horn}, the real part of every eigenvalue of matrix $B$ is positive.  Its determinant is therefore positive, and the matrix $B$ is invertible: $W = B^{-1}$.
\end{proof}

\section*{Appendix C: Proof of Theorem 2}
\noindent \textbf{Assumptions:} $\gamma \geq 0$, $0 \leq A_{rs} < \infty $ for all $r,s$ and $0 < \frac{\sigma_t^2}{N_t} < \infty$ for all $t$.\\

Before proving Theorem 2, we will need to prove two more lemmas.

\begin{lemma}
$W$ has all non-negative entries.
\end{lemma}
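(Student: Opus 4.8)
The plan is to exploit the sign structure of $B = W^{-1} = I + \frac{\gamma}{T}\Sigma L$ that was already computed in the proof of Lemma 1: the diagonal entries satisfy $B_{tt} = 1 + \frac{\gamma\sigma_t^2}{T N_t}\sum_{s\neq t}A_{ts} \geq 1 > 0$, while the off-diagonal entries $B_{ts} = -\frac{\gamma\sigma_t^2}{T N_t}A_{ts} \leq 0$ are non-positive. Thus $B$ has non-positive off-diagonals, and Lemma 1 has already shown that every eigenvalue of $B$ has positive real part; together these facts say precisely that $B$ is a nonsingular M-matrix, and it is classical that the inverse of a nonsingular M-matrix is entrywise non-negative. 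Rather than merely citing this fact, I would give a short self-contained argument based on a regular splitting.

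Concretely, I would split $B = M - N$, where $M = I + \frac{\gamma}{T}\Sigma D$ is diagonal with strictly positive entries $M_{tt} = 1 + \frac{\gamma\sigma_t^2}{T N_t}\sum_{s\neq t}A_{ts}$, and $N = \frac{\gamma}{T}\Sigma A$ has all entries non-negative (with zero diagonal, since $A_{tt}=0$). Then $B = M(I - M^{-1}N)$, so
\[
W = B^{-1} = (I - M^{-1}N)^{-1}M^{-1}.
\]
Since $M^{-1}\geq 0$ is diagonal with positive entries, it suffices to show that $(I - M^{-1}N)^{-1}$ exists and is entrywise non-negative, because a product of entrywise non-negative matrices is again non-negative.

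The key step is to bound the non-negative matrix $M^{-1}N$. Its $t$th row sum is
\[
\sum_{s}(M^{-1}N)_{ts} = \frac{\frac{\gamma\sigma_t^2}{T N_t}\sum_{s\neq t}A_{ts}}{1 + \frac{\gamma\sigma_t^2}{T N_t}\sum_{s\neq t}A_{ts}} < 1,
\]
being of the form $x/(1+x)$ with $x\geq 0$. Hence $\norm{M^{-1}N}_\infty < 1$ and therefore $\rho(M^{-1}N) \leq \norm{M^{-1}N}_\infty < 1$. The Neumann series then converges, $(I - M^{-1}N)^{-1} = \sum_{k=0}^\infty (M^{-1}N)^k$, and every summand is a product of entrywise non-negative matrices, so the sum is entrywise non-negative. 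Multiplying by $M^{-1}\geq 0$ yields $W \geq 0$, as claimed.

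The main obstacle is verifying $\rho(M^{-1}N)<1$, and this is exactly where the hypotheses $\gamma\geq 0$, $0\leq A_{ts}<\infty$, and $0<\sigma_t^2/N_t<\infty$ enter: they guarantee that the denominator above strictly exceeds the numerator in every row (even a zero row of $A$ gives row sum $0<1$). Everything else is routine, reusing the diagonal-dominance structure already exposed by the Gershgorin argument in Lemma 1.
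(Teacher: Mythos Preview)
Your argument is correct. Both your proof and the paper's rest on the same observation---that $B=W^{-1}$ has non-positive off-diagonals and, by the Gershgorin computation of Lemma~1, eigenvalues with positive real part---but they finish differently. The paper simply notes that such a $B$ is a $Z$-matrix and invokes a standard equivalence from \citet{Berman79} (Chapter~6, Theorem~2.3): for $Z$-matrices, ``every eigenvalue has positive real part'' is equivalent to ``the inverse exists and is entrywise non-negative.'' You instead give a self-contained regular-splitting argument, writing $B=M(I-M^{-1}N)$ with $M$ diagonal positive and $N\ge 0$, bounding the row sums of $M^{-1}N$ strictly below~$1$, and summing the Neumann series. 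Your route is more elementary and constructive (it does not rely on an external reference and in fact re-derives the relevant piece of $M$-matrix theory), while the paper's is shorter but depends on the cited equivalence. Either is perfectly acceptable; yours has the added virtue of making explicit exactly where each hypothesis is used.
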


\begin{proof} By inspection it is clear that $W^{-1} = \left(I + \frac{\gamma}{T} \Sigma L\right)$ is a \emph{Z-matrix}, defined to be a matrix with non-positive off-diagonal entries \citep{Berman79}.  If $W^{-1}$ is a Z-matrix, then the following two statements are true and equivalent: ``the real part of each eigenvalue of $W^{-1}$ is positive'' and ``$W$ exists and $W \geq 0$ (elementwise)'' (Chapter 6, Theorem 2.3, $G_{20}$ and $N_{38}$, \citep{Berman79}).  It has already been proven in Lemma 1 that the real part of every eigenvalue of $W^{-1}$ is positive.  Therefore, $W$ exists and is element-wise non-negative.
\end{proof}

\begin{lemma}
The rows of $W$ sum to 1, i.e. $W\one = \one$.
\end{lemma}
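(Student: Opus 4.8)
The plan is to avoid working with the entries of $W$ directly—which are not available in closed form—and instead verify the equivalent claim at the level of $W^{-1}$. Since Lemma 1 guarantees that $W = \left(I + \frac{\gamma}{T}\Sigma L\right)^{-1}$ exists, the identity $W\one = \one$ holds if and only if $\one = W^{-1}\one$, obtained by left-multiplying both sides by $W^{-1} = I + \frac{\gamma}{T}\Sigma L$. So I would reduce the lemma to showing
$$\left(I + \frac{\gamma}{T}\Sigma L\right)\one = \one.$$

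The key step is the fundamental property of the graph Laplacian that $L\one = \zero$. This follows directly from the definition $L = D - A$ with $D_{tt} = \sum_{s} A_{ts}$: the $t$th entry of $L\one$ is $\sum_{s} L_{ts} = D_{tt} - \sum_{s} A_{ts} = \sum_{s} A_{ts} - \sum_{s} A_{ts} = 0$, so every row of $L$ sums to zero. (Since the Laplacian used here is that of the symmetrized matrix $(A+A^T)/2$, the same cancellation applies with the corresponding degree matrix.)

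Given $L\one = \zero$, the computation finishes immediately: $\left(I + \frac{\gamma}{T}\Sigma L\right)\one = \one + \frac{\gamma}{T}\Sigma(L\one) = \one + \frac{\gamma}{T}\Sigma\,\zero = \one$. Hence $W^{-1}\one = \one$, and multiplying through by $W$ gives $\one = W\one$, as desired.

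There is really no substantive obstacle here; the proof is a one-line consequence of $L\one = \zero$. The only thing worth being careful about is the logical direction: because $W$ is defined as an inverse, it is cleaner to establish the statement for $W^{-1}$ and then invert, rather than attempting to read off the row sums of $W$ itself. Combined with Lemma 3 (that $W$ has non-negative entries), this lemma shows each row of $W$ is a probability vector, which is exactly what is needed to conclude that each MTA estimate is a convex combination of the sample averages.
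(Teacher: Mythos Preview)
Your proposal is correct and matches the paper's own proof essentially line for line: both invert the claim to $W^{-1}\one = \one$, invoke $L\one = \zero$ from the row-sum property of the graph Laplacian, and conclude. The only cosmetic difference is that the paper writes the chain of equalities starting from $W\one = \one$ as an equivalence, whereas you explicitly justify the direction of the inference via left-multiplication by $W^{-1}$.
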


\begin{proof}
As proved in Lemma 1, $W$ exists.  Therefore, one can write:
\begin{align*}
W\one =& \one\\
\Leftrightarrow \one =& W^{-1}\one\\
=& \left(I + \frac{\gamma}{T} \Sigma L\right)\one\\
=& I\one + \frac{\gamma}{T} \Sigma L \one\\
=& \one + \frac{\gamma}{T} \Sigma \zero\\
=& \one,
\end{align*}
where the the third equality is true because the graph Laplacian has rows that sum to zero.  The rows of $W$ therefore sum to 1.
\end{proof}

\noindent \textbf{Theorem}  \emph{The MTA solution matrix $W = \left(I + \frac{\gamma}{T} \Sigma L\right)^{-1}$ is right-stochastic.}\\

\begin{proof}
We know that $W$ exists (from Lemma 1), is entry-wise non-negative (from Lemma 2), and has rows that sum to 1 (from Lemma 3).
\end{proof}

\section*{Appendix D: Constant MTA Derivation}
For the case when $T>2$, analytically specifying a general similarity matrix $A$ that minimizes the risk is intractable.  To address this limitation for arbitrary $T$, we constrain the similarity matrix to be the constant matrix $A = a\one\one^T$, resulting in the following weight matrix:
\begin{equation}\label{eqn:w_cnst_full}
\left(I + \frac{\gamma}{T}\Sigma L(a\one\one^T)\right)^{-1}.
\end{equation}
For a general, asymmetric $A$ there are $T(T-1)$ parameters to estimate.  For the constant $A = a\one\one^T$  only $a$ needs to be estimated ($\gamma$ is set to 1 w.l.o.g.).  It turns out, however, that finding $a^*$ for arbitrary $T$ by minimizing the risk of the estimator $\hat{y} = W^\text{cnst}\bar{y}$ is not tractable, but becomes tractable for a simplified version of (\ref{eqn:w_cnst_full}) where the trace of the covariance replaces the full covariance.  Thus we find $a^*$ as follows
\begin{equation}\label{eqn:w_cnst_simple}
a^*= \argmin_a R\left(\mu,\left(I + \frac{\gamma}{T}\frac{\tr(\Sigma)}{T} L(a\one\one^T)\right)^{-1}\bar{Y}\right).
\end{equation}
and then plug this $a^*$ into (\ref{eqn:w_cnst_full}) to obtain ``constant MTA''.
\begin{equation}\label{eqn:w_cnst_full}
W^\text{cnst} = \left(I + \frac{\gamma}{T}\Sigma L(a^*\one\one^T)\right)^{-1}.
\end{equation}

First, we simplify $W^\text{cnst}$ using the Sherman-Morrison formula:
\begin{align}\label{eqn:w_cnst}
\left(I + \frac{1}{T}\Sigma L(a\one\one^T)\right)^{-1}
&= \left(I + \frac{a}{T}\Sigma L(T I - \one\one^T)\right)^{-1} \nonumber \\
&= \left(I + a \Sigma  - \frac{a}{T} \Sigma \one\one^T\right)^{-1} \nonumber \\
&= (I + a\Sigma)^{-1} + \frac{(I + a \Sigma)^{-1} \frac{a}{T} \Sigma \one\one^T (I + a \Sigma)^{-1}} {1 - \frac{a}{T} \one^T (I + a \Sigma)^{-1}\Sigma\one },
\end{align}
and set $\Sigma = \frac{\tr(\Sigma)}{T} I$ to get $W^\text{smpl}$:
\begin{align*}
W^\text{smpl} &= \frac{1}{1 + a \frac{\tr(\Sigma)}{T}}I + \frac{\frac{1}{1 + a \frac{\tr(\Sigma)}{T}} \frac{a}{T} \frac{\tr(\Sigma)}{T}  \one\one^T \frac{1}{1 + a\frac{\tr(\Sigma)}{T}}} {1 - \frac{a}{T} \one^T \frac{1}{1 + a \frac{\tr(\Sigma)}{T}} \frac{\tr(\Sigma)}{T} \one }\\
&= \frac{1}{a \frac{\tr(\Sigma)}{T} + 1}I + \frac{\frac{ a\frac{\tr(\Sigma)}{T}}{a \frac{\tr(\Sigma)}{T} + 1} \frac{1}{T}  \one\one^T \frac{1}{1 + a \frac{\tr(\Sigma)}{T}}} {1 - \frac{a\frac{\tr(\Sigma)}{T}}{1 + a \frac{\tr(\Sigma)}{T}} }\\
&= \frac{1}{a \frac{\tr(\Sigma)}{T} + 1}I + \frac{ a\frac{\tr(\Sigma)}{T}}{a \frac{\tr(\Sigma)}{T} + 1} \frac{1}{T}  \one\one^T \\
&= \frac{1}{a \frac{\tr(\Sigma)}{T} + 1} \left( I + a\frac{\tr(\Sigma)}{T^2}  \one\one^T \right).
\end{align*}
The risk of $y^* = W^\text{smpl}\bar{y}$ is
\begin{align*}
R(\mu,Y^*) &= \tr(W^\text{smpl}\Sigma (W^\text{smpl})^T) + \mu^T(W^\text{smpl}-I)^T(W^\text{smpl}-I)\mu\\
&= \tr\left(\frac{1}{a \frac{\tr(\Sigma)}{T} + 1} \left( I + a\frac{\tr(\Sigma)}{T^2}  \one\one^T \right)\Sigma I \frac{1}{a \frac{\tr(\Sigma)}{T} + 1} \left( I + a\frac{\tr(\Sigma)}{T^2}  \one\one^T \right)^T\right)
 \\ &\quad + \mu^T\left(\frac{1}{a \frac{\tr(\Sigma)}{T} + 1} \left( I + a\frac{\tr(\Sigma)}{T^2}  \one\one^T \right)-I\right)^T\left(\frac{1}{a \frac{\tr(\Sigma)}{T} + 1} \left( I + a\frac{\tr(\Sigma)}{T^2}  \one\one^T \right)-I\right)\mu\\
&=\frac{1}{(a \frac{\tr(\Sigma)}{T} + 1)^2} \tr\left( \left( I + a\frac{\tr(\Sigma)}{T^2}  \one\one^T \right)\Sigma \left( I + a\frac{\tr(\Sigma)}{T^2}  \one\one^T \right)\right)
 \\ &\quad + \mu^T\left( \frac{-a\frac{\tr(\Sigma)}{T}}{a \frac{\tr(\Sigma)}{T} + 1} I + \frac{a\frac{\tr(\Sigma)}{T}}{a \frac{\tr(\Sigma)}{T} + 1} \frac{1}{T}  \one\one^T \right)^T \left(\frac{-a\frac{\tr(\Sigma)}{T}}{a \frac{\tr(\Sigma)}{T} + 1} I + \frac{a\frac{\tr(\Sigma)}{T}}{a \frac{\tr(\Sigma)}{T} + 1} \frac{1}{T}  \one\one^T \right)\mu\\
&= \frac{1 }{(a \frac{\tr(\Sigma)}{T} + 1)^2} \tr\left(\Sigma + 2a\frac{\tr(\Sigma)}{T^2}\one\one^T\Sigma + a^2\frac{\tr(\Sigma)^2}{T^4}\one\one^T\Sigma\one\one^T \right)
 \\ &\quad + \frac{(a\frac{\tr(\Sigma)}{T})^2}{(a \frac{\tr(\Sigma)}{T} + 1)^2} \mu^TL\left(\frac{1}{T} \one\one^T\right)^TL\left(\frac{1}{T} \one\one^T\right)\mu\\
   &= \frac{\frac{\tr(\Sigma)}{T} }{(a \frac{\tr(\Sigma)}{T} + 1)^2} \left( T + 2a\frac{\tr(\Sigma)}{T} + \left(a \frac{\tr(\Sigma)}{T}\right)^2 \right)
 \\ &\quad +  \frac{(a\frac{\tr(\Sigma)}{T})^2}{(a \frac{\tr(\Sigma)}{T} + 1)^2} \mu^TL\left(\frac{1}{T} \one\one^T\right)^TL\left(\frac{1}{T} \one\one^T\right)\mu\\
\end{align*}
To find the minimum, we take the partial derivative w.r.t. $a$ and set it equal to zero.  Again noting that
$$L\left(\frac{1}{T} \one\one^T\right)^TL\left(\frac{1}{T} \one\one^T\right) = L\left(\frac{1}{T} \one\one^T\right),$$
and omitting some tedious algebra,
\begin{align*}
\frac{\partial}{\partial a^*} R(\mu,Y^*) = 0 &= \frac{2\frac{\tr(\Sigma)}{T}(-T + 1 + a^*\mu^TL\left(\frac{1}{T} \one\one^T\right)\mu)}
{(a^* \frac{\tr(\Sigma)}{T} + 1)^3}\\
\Leftrightarrow a^* &= \frac{T-1}{\mu^TL\left(\frac{1}{T} \one\one^T\right)^TL\left(\frac{1}{T} \one\one^T\right)^T\mu}\\
&= \frac{T-1}{\mu^T L\left(\frac{1}{T}\one\one^T\right) \mu}\\
&= \frac{2}{\frac{1}{T(T-1)}\sum_{r=1}^T\sum_{s=1}^T (\mu_r - \mu_s)^2}.
\end{align*}

\section*{Appendix E: Minimax MTA Derivation}
First, some definitions are in order.
\begin{itemize}
\item An estimator $Y^M$ of $\mu$ which minimizes the maximum risk
$$\inf_{\hat{Y}} \sup_\mu R(\mu,\hat{Y}) = \sup_\mu R(\mu,Y^M),$$
is called a \emph{minimax} estimator.  \\
\item The \emph{average risk} for estimator $\hat{Y}$ is
\begin{equation}\label{eqn:average_risk}
r(\pi,\hat{Y}) = \int R(\mu,\hat{Y}) \pi(\mu) d\mu,
\end{equation}
where $\pi$ is a prior on $\mu$.
\item The estimator that minimizes the average risk is called the \emph{Bayes estimator}
and is written $$Y_\pi = \argmin_{\hat{Y}} r(\pi,\hat{Y}).$$
\item The \emph{Bayes risk} is the risk of the Bayes estimator and is written
\begin{equation}\label{eqn:bayes_risk}
r(\pi,Y_\pi) = \int R(\mu,Y_\pi) \pi(\mu) d\mu.\\
\end{equation}
\item A prior distribution $\pi$ is \emph{least favorable} if $r(\pi,Y_\pi) \geq r(\pi',Y_\pi')$ for all priors $\pi'$.\\
\end{itemize}
To find a minimax MTA, we will need the following theorem and corollary (Theorem 1.4, Chapter 5 \citep{LehmannCasella}).\\ \\

\noindent \textbf{Theorem} \emph{Suppose that $\pi$ is a distribution on the space of $\mu$ such that
$$r(\pi,Y_\pi) = \sup_\mu R(\mu,Y_\pi).$$
Then:
\begin{enumerate}
\item $Y_\pi$ is minimax.
\item If $Y_\pi$ is the \emph{unique} Bayes solution w.r.t. $\pi$ (i.e. if it is the only minimizer of (\ref{eqn:bayes_risk})), then it is the unique minimax estimator.
\item The prior $\pi$ is least favorable.\\
\end{enumerate}}
\noindent  \textbf{Corollary}  \emph{If a Bayes estimator $Y_\pi$ has constant risk, then it is minimax.}\\ \\

The first step in finding a minimax solution for the $T=2$ case is specifying a constraint set for $\mu$ over which a least favorable prior (LFP) can be found.  If no constraint set is used, $\mu_t = \infty$ is the worst case, and leads to a LFP that puts all of its mass on that point.  We will use one of the simplest constraint sets, and constrain each $\mu_t$ to be in the interval $\mu \in [b_l,b_u]^T$, where $b_l \in \R$ and $b_u \in \R$.  To find the LFP we must find the $\mu$ that makes the risk as large as possible.  For $T=2$ and right-stochastic $W$, we have that the $\mu$-dependent term in the (\ref{eqn:risk}) can be written as
$(W-I)^T(W-I) = (W_{12}^2+W_{21}^2)L(\one\one^T)$, and therefore
$$\mu^T(W-I)^T(W-I)\mu = (W_{12}^2+W_{21}^2) (\mu_1-\mu_2)^2,$$
which is clearly maximized by either $\mu_1 = b_u, \mu_2 = b_l$ or $\mu_1 = b_l, \mu_2 = b_u$.  Therefore the LFP is
\begin{align*}
p(\mu) =
\begin{cases}
\frac{1}{2}, & \mbox{if } \mu = (b_l,b_u)  \\
\frac{1}{2}, & \mbox{if } \mu = (b_u,b_l)  \\
0, & \mbox{otherwise.}
\end{cases}\\
\end{align*}
The next step is to \emph{guess} a minimax weight matrix $W^M$ and show that the estimator $Y^M = W^M\bar{Y}$ (i) has constant risk and (ii) is a Bayes solution.  According to the corollary, if both (i) and (ii) hold for the guessed $W^M$, then  $W^M\bar{Y}$ is minimax.  For the $T=2$ case, we guess  $W^M$ to be
\begin{align*}
W^* = \left(I + \frac{2}{T(b_l-b_u)^2}\Sigma L(\one\one^T)\right)^{-1},
\end{align*}
which is just $W^\text{cnst}$ with $a = \frac{2}{(b_l-b_u)^2}$.  This choice of $W$ is not a function of $\mu$ and thus we have shown that (i) the Bayes risk w.r.t the LFP is constant for all $\mu$.  What remains to show is (ii) $W^M$ is indeed the Bayes solution, i.e. it is minimizer of the Bayes risk:
\begin{align}\label{eqn:sum_risks}
&\frac{1}{2}\left( [b_l ~b_u](W-I)^T(W-I) \left[
                                     \begin{array}{c}
                                       b_l \\
                                       b_u \\
                                     \end{array}
                                   \right]
                                   + \tr(W\Sigma W^T)\right) \nonumber\\
+ &\frac{1}{2}\left([b_u ~b_l](W-I)^T(W-I)\left[
                                     \begin{array}{c}
                                       b_u \\
                                       b_l \\
                                     \end{array}
                                   \right]
                                    + \tr(W\Sigma W^T)\right).
\end{align}
Note that this expression is the sum of two convex risks.  We already know (see (7) on page 4 of the NIPS paper) that for $T=2$ the minimizer of the risk
$$[\mu_1 ~\mu_2](W-I)^T(W-I)\left[
                                     \begin{array}{c}
                                       \mu_1 \\
                                       \mu_2 \\
                                     \end{array}
                                   \right] + \tr(W\Sigma W^T)$$
is
$W^* = \left(I + \frac{2}{T(\mu_1-\mu_2)^2}\Sigma L(\one\one^T)\right)^{-1}.$
Thus, the minimizer of
$$[b_l ~b_u](W-I)^T(W-I)\left[
                                     \begin{array}{c}
                                       b_l \\
                                       b_u \\
                                     \end{array}
                                   \right] + \tr(W\Sigma W^T)$$
is
$W^1 = \left(I + \frac{2}{T(b_l-b_u)^2}\Sigma L(\one\one^T)\right)^{-1},$
and the minimizer of
$$[b_u ~b_l](W-I)^T(W-I)\left[
                                     \begin{array}{c}
                                       b_u \\
                                       b_l \\
                                     \end{array}
                                   \right] + \tr(W\Sigma W^T)$$
is
$W^2 = \left(I + \frac{2}{T(b_u-b_l)^2}\Sigma L(\one\one^T)\right)^{-1}.$
Clearly $W^1 = W^2$ which means that the two risks in (\ref{eqn:sum_risks}) are both minimized by the same weight matrix $W^1$, and thus their sum is also minimized by $W^1$.  Therefore
\begin{align}\label{eqn:minimax_mta}
W^M &= \left(I + \frac{2}{T(b_u-b_l)^2}\Sigma L(\one\one^T)\right)^{-1}
\end{align}
as was to be shown.  One can conclude that $W^M$ is minimax over all estimators of the form $W = \left(I + \frac{\gamma}{T}\Sigma L\right)^{-1}$ for $T=2$ using the interval constraint set.

\section*{Appendix F: Proof of Proposition 2}

\noindent \textbf{Proposition 2} \emph{The set of estimators $W\bar{Y}$ where $W$ is of MTA form as per (\ref{eq:mta_general}) is strictly larger than the set of estimators that regularize the single-task estimates as follows:}
\begin{align*}
\hat{Y}_t = \frac{1}{\gamma} \bar{y}_t + \sum_{r=1}^T \alpha_r \bar{Y}_r,
\end{align*}
\emph{where} $\sum_{r=1}^T \alpha_r = 1-\frac{1}{\gamma}$, $0 < \frac{1}{\gamma} \leq 1$, and $\alpha_r \geq 0$, $\forall r$.\\

\begin{proof}
First we will show that estimators $\hat{Y}_t$ can be written in MTA form. Rewriting $\hat{Y}$ in matrix notation:
\begin{align*}
\hat{Y}_t &= \frac{1}{\gamma} \bar{Y}_t + \sum_{t=1}^T \alpha_t \bar{Y}_t\\
\Leftrightarrow \hat{Y} &= \left(\frac{1}{\gamma} I + \one \alpha^T\right)\bar{Y}.
\end{align*}
The goal now is to show that $(\frac{1}{\gamma} I + \one \alpha^T)^{-1}$ has MTA form.  Using the Sherman-Morrison formula, we get
\begin{align*}
\left(\frac{1}{\gamma} I + \one \alpha^T\right)^{-1} &= \gamma I - \frac{\gamma^2 \one \alpha^T}{1 + \gamma \alpha^T\one}\\
&= \gamma I - \gamma \one \alpha^T\\
&= I + (\gamma-1) I - \gamma \one \alpha^T\\
&= I + \gamma\left(1-\frac{1}{\gamma}\right)I - \gamma \one \alpha^T\\
&= I + \gamma L(\one \alpha^T),
\end{align*}
which is a matrix of MTA form with appropriate choices of $\gamma$, $\Sigma$, and $A$ (obtained by visual pattern matching).  Thus, estimators $\hat{Y}_t$ can be written in MTA form:
\begin{equation}\label{eqn:convex_reg}
\hat{Y} = (I + \gamma L(\one \alpha^T))^{-1}.
\end{equation}
By inspection of (\ref{eq:mta_general}), it is clear that not all matrices of the form $(I+\Gamma L(A))^{-1}$ can be written as (\ref{eqn:convex_reg}).  This implies that matrices of MTA form are strictly more general than matrices of the form in (\ref{eqn:convex_reg}).
\end{proof}

\bibliography{ProposalRefs}

\end{document}